\DeclareMathOperator*{\argmax}{argmax}
\newtheorem{proposition}{Proposition}[section]
\newcommand{\reals}{\mathbb{R}}
\renewcommand{\vec}[1]{\ensuremath{\mathchoice%
{\mbox{\boldmath$\displaystyle\mathsf{#1}$}}%
{\mbox{\boldmath$\textstyle\mathsf{#1}$}}%
{\mbox{\boldmath$\scriptstyle\mathsf{#1}$}}%
{\mbox{\boldmath$\scriptscriptstyle\mathsf{#1}$}}}}
\title{Successor-Predecessor Intrinsic Exploration}
\author{%
Changmin Yu$^{1, 2}$ \quad Neil Burgess$^{1, \ast}$ \quad Maneesh Sahani$^{2, \ast}$ \quad Samuel J. Gershman$^{3, \ast}$ \\
$^1$Institute of Cognitive Neuroscience; \quad $^2$Gatsby Computational Neuroscience Unit; 
\\UCL, London, United Kingdom \\
$^3$Department of Psychology, Harvard University, Cambridge, United States \\
$^\ast$ Joint senior authors
}
\begin{document}

\maketitle

\begin{abstract}
  Exploration is essential in reinforcement learning, particularly in
  environments where external rewards are sparse. Here we focus on exploration
  with \textit{intrinsic rewards}, where the agent transiently augments the
  external rewards with self-generated intrinsic rewards. Although the study of
  intrinsic rewards has a long history, existing methods focus on composing the
  intrinsic reward based on measures of future prospects of states, ignoring the
  information contained in the retrospective structure of transition sequences.
  Here we argue that the agent can utilise retrospective information to
  generate explorative behaviour with structure-awareness, facilitating
  efficient exploration based on global instead of local information. We propose
  \textit{Successor-Predecessor Intrinsic Exploration} (SPIE), an exploration
  algorithm based on a novel intrinsic reward combining prospective and
  retrospective information. We show that SPIE yields more efficient and
  ethologically plausible exploratory behaviour in environments with sparse rewards and bottleneck states 
  than competing methods. We also implement SPIE in deep reinforcement
  learning agents, and show that the resulting agent achieves stronger empirical
  performance than existing methods on sparse-reward Atari games. 
\end{abstract}

\section{Introduction}
\label{sec: intro}


The study of exploration in reinforcement learning (RL) has produced a broad
range of methods~\citep{sutton2018reinforcement, amin2021survey}, ranging from
simple methods such as pure randomization~\citep{tesauro1995temporal,
sutton2018reinforcement, dabney2020temporally}, to more sophisticated methods
such as targeted exploration towards states with high
uncertainty~\citep{osband2016deep, pathak2017curiosity, laskin2022cic} and
implicit exploration with entropy maximization~\citep{neu2017unified,
haarnoja2018soft}. Intrinsic exploration, a highly effective class of methods,
uses intrinsic rewards based on the agent's current knowledge of the
environment, hence informing targeted exploration towards states with high
predictive uncertainty or state occupancy
diversity~\citep{schmidhuber1991curious, gregor2016variational,
pathak2017curiosity, machado2020count}. However, existing approaches define the
intrinsic reward based solely on prospective or empirical marginal information
about future states, ignoring retrospective information (e.g., does a given
state always precedes the goal state, hence should be more frequently
traversed?). We argue that the retrospective information contains useful signals
about the connectivity structure of the environment, hence could facilitate more
efficient targeted exploration.
For example, consider a clustered environment with bottleneck states connecting
the clusters (Figure~\ref{fig: grids_demo}), exploration based on local
information (e.g., visitation counts) would discourage the agent from traversing
bottleneck states, despite the key roles these states play in connecting
different clusters. Guiding the agents to visit such ``bottleneck'' states in
the face of minimal local information gain is essential in driving efficient and
biologically plausible exploration. Here we study the contribution of
retrospective information for global exploration with intrinsic motivation.

One of the most successful recent intrinsic exploration
algorithms~\citep{machado2020count} uses the successor representation
(SR;~\citep{dayan1993improving,gershman18}) to generate intrinsic rewards. The SR
represents each state in terms of successor states. The row norms of the SR can be used as an
intrinsic reward that generalises count-based exploration
\citep{machado2020count}. As we discuss in Section~\ref{sec: method}, the SR
contains not only prospective information, but also retrospective
information about expected predecessors. This information can be utilised to
construct a novel intrinsic reward which overcomes some of the problems
associated with purely prospective intrinsic rewards, such as untargeted exploration, the augmented reward function is non-stationary, and asymptotic uniformity. 

We provide a brief overview of background and relevant literature in
Section~\ref{sec: prelim}, and formally introduce the novel intrinsic
exploration method, \textit{Successor-Predecessor Intrinsic Exploration} (SPIE),
in Section~\ref{sec: method}. We propose two instantiations of SPIE for
discrete and continuous state spaces, with comprehensive empirical
examinations of properties of SPIE in discrete state space. We show that SPIE
facilitates more efficient exploration, in terms of improved sample efficiency
of learning and higher asymptotic return, through empirical evaluations on both
discrete and continuous environments in Section~\ref{sec: experiments}.


\section{Background and related work}
\label{sec: prelim}

\textbf{Reinforcement Learning Preliminaries}. We consider the standard RL
problem in Markov Decision Processes (MDP), defined by the tuple, $\langle
\mathcal{S}, \mathcal{A}, \mathcal{P}, \mathcal{P}_{0}, \mathcal{R}, \gamma
\rangle$, where $\mathcal{S}$ is the state space, $\mathcal{A}$ is the action
space, $\mathcal{P}: \mathcal{S}\times \mathcal{A} \rightarrow
\Delta(\mathcal{S})$ is the state transition distribution (where
$\Delta(\mathcal{S}$) is the probability simplex over $\mathcal{S}$),
$\mathcal{P}_{0} \in \Delta(\mathcal{S})$ is the initial state
distribution, $\mathcal{R}: \mathcal{S}\times\mathcal{A}\rightarrow \reals$ is
the reward function, and $\gamma \in (0, 1)$ is the discount factor. 
The goal for an RL agent is to learn the optimal policy that maximises value
(expected cumulative discounted reward): $\pi^{\ast}(a|s) = \argmax_{\pi}
q^\pi(s,a), \forall (s, a)\in\mathcal{S}\times\mathcal{A}$, where $\pi: \mathcal{S}\rightarrow \Delta(\mathcal{A})$ is the
policy, and $q^\pi(s,a)$ is the state-action value function:
\begin{align}
\textstyle
    q^\pi(s,a) = \mathbb{E}_{\mathcal{P}^{\pi}}\left[\sum_{\tau=0}^{\infty}\gamma^{\tau}\mathcal{R}(s_{\tau}, a_{\tau})|s_{0}=s, a_0 = a \right] = \mathbb{E}_{\mathcal{P}^{\pi}}[\mathcal{R}(s,a) + \gamma q^\pi(s',a')]\,,
\end{align}
where $\mathcal{P}^{\pi}(s'|s) = \sum_{a}\pi(a|s)\mathcal{P}(s'|s, a)$ is the
marginal state transition distribution given $\pi$\footnote{Note that unless
otherwise stated, we assume deterministic MDP, i.e., $\mathcal{P}(s'|s, a)$ is a
delta function.}. The second equality is the recursive form of the action value
function known as the \emph{Bellman equation} \citep{bellman1966dynamic}, which
underlies temporal difference
learning~\citep{sutton2018reinforcement}:
\begin{equation}
    \hat{q}^{\pi}(s_{t}, a_{t}) \leftarrow \hat{q}^{\pi}(s_{t}, a_{t}) + \alpha \delta_{t}, \quad \delta_{t} = 
    r_{t} + \gamma \hat{q}^{\pi}(s_{t+1}, a_{t+1}) - \hat{q}^{\pi}(s_{t}, a_{t})\,,
    \label{eq: td_q}
\end{equation}
where $\hat{q}^{\pi}(s_{t}, a_{t})$ is the current estimate of the action values (with respect to $\pi$), $\delta_{t}$ is the (one-step) temporal difference (TD) error. We will
study the effect of different intrinsic rewards on the performance of online TD
learning (SARSA) in discrete state MDPs.

\textbf{The successor representation}. The SR is defined as the expected
cumulative discounted future state occupancy under the policy\footnote{For
notational simplicity, we keep the policy dependence implicit. Similar
notational choice holds for all quantities discussed in the rest of the paper
($\vec F$ and $\vec N$).}:
\begin{equation}
\textstyle
    \vec M[s, s'] = \mathbb{E}_{\mathcal{P}^{\pi}}\left[
        \sum_{\tau=0}^{\infty}\gamma^{\tau}\mathds{1}(s_{\tau}, s')|s_{0}=s
        \right] 
        = \mathbb{E}_{\mathcal{P}^{\pi}}\left[
            \mathds{1}(s_{0}, s') + \gamma \vec M(s_{1}, s') | s_{0}=s
        \right].
    \label{eq: sr_def}
\end{equation}
Given the recursive formulation, it is possible to learn the SR matrix online
with TD learning. Given the transition tuple,
$(s_{t}, a_{t}, r_{t}, s_{t+1}, a_{t+1})$, the update is
\begin{equation}
    \hat{\vec M}[s_{t}, s'] \leftarrow \hat{\vec M}[s_{t}, s'] + 
    \alpha\delta^{\vec M}_{t}, \quad 
    \delta^{\vec M}_{t} = \mathds{1}(s_{t}, s') 
    + \gamma \hat{\vec M}[s_{t+1}, s'] - \hat{\vec M}[s_{t}, s']\,,
\end{equation}
Note that these equations are analogous to TD learning for value function
estimation, except that in this case the function being learned is a
vector-valued (one-hot) representation of future states. 


\textbf{First-occupancy representation}. The SR captures the expected
cumulative discounted state occupancy over all future steps. However, in many
real-world and simulated tasks, it may be preferable to reach the goal
state as quickly as possible instead of as frequently as possible. In this spirit, \citet{moskovitz2021first}
introduced the \textit{First-occupancy Representation} (FR). Formally,
the FR matrix in a discrete MDP is defined by
\begin{equation}
    \begin{split}
    \vec F[s, s'] &= \mathbb{E}_{\mathcal{P}^{\pi}}\left[
        \sum_{\tau=0}^{\infty}\gamma^{\tau}\mathds{1}(s_{\tau}=s', s'\notin\{s_{0:\tau}\})|s_{0}=s
    \right] \\
    &= 
    \mathbb{E}_{\mathcal{P}^{\pi}}\left[
        \mathds{1}(s_{t}, s') + \gamma(1-\mathds{1}(s_{t}, s'))\vec F[s_{t+1}, s']|s_{t}=s
    \right]\,,
    \end{split}
\end{equation}
where $\{s_{0:\tau}\} = \{s_{0}, s_{1}, \dots, s_{\tau-1}\}$. The
recursive formulation implies that there is an efficient TD learning rule for online
learning of the FR matrix. Given the transition tuple $(s_{t}, a_{t}, r_{t},
s_{t+1}, a_{t+1})$, the update rule is
\begin{equation}
    \hat{\vec F}[s_{t}, s'] \leftarrow \hat{\vec F}[s_{t}, s'] + \alpha\delta^{\vec F}_{t}, \quad 
    \delta^{\vec F}_{t} = \mathds{1}(s_{t}, s') + 
    \gamma(1-\mathds{1}(s_{t}, s'))\hat{\vec F}[s_{t+1}, s'] - 
    \hat{\vec F}[s_{t}, s']\,,
\end{equation}

\textbf{Intrinsic exploration in RL}. 
Here we focus on exploration with
intrinsic motivation, where the agent augments the external rewards with
self-constructed intrinsic rewards based on its current knowledge of the
environment.
\begin{equation}
    r^{\text{tot}}(s, a) = r^{\text{ext}}(s, a) + \beta r^{\text{int}}(s, a)\,,
\end{equation}
where $r^{\text{ext}}(s,a)$ denotes the extrinsic environmental reward,
$r^{\text{int}}(s,a)$ denotes the (possibly non-stationary) intrinsic reward,
and $\beta$ is a multiplicative scaling factor controlling the relative balance
of $r^{\text{ext}}(s,a)$ and $r^{\text{int}}(s,a)$. The intrinsic reward often
operates by motivating the agent to move into under-explored parts of the state
space in the absence of extrinsic reinforcement. Many types of intrinsic rewards have
been proposed, including functions of state visitation
counts~\citep{dayan1996exploration, sutton1990integrated, fu2017ex2}, predictive
uncertainty of value estimation~\citep{osband2016deep}, and predictive error of
forward models~\citep{schmidhuber1991curious, pathak2017curiosity,
burda2018exploration}. In a closely related work, \citet{zhang2021noveld} proposes NovelD, which constructs the episode-specific non-negative intrinsic reward based on the difference between the novelty measures of temporally adjacent states along a trajectory. However in contrast to SPIE (discussed later), the key difference is that NovelD does not explicitly utilise the retrospective information for exploration and the associated intrinsic reward is episode-dependent.

\section{Successor-Predecessor Intrinsic Exploration}
\label{sec: method}

Existing intrinsic exploration methods construct intrinsic rewards based on
either the predictive information in a temporally forward fashion (e.g.,
predictive error), or the empirical marginal distribution (e.g., count-based
exploration). Here we argue that the
retrospective information inherent in experienced trajectories, though
having been largely overlooked in the literature, could also be utilised as a
useful exploratory signal. Specifically, consider the environment
in Figure~\ref{fig: grids_demo}
(\textit{Cluster-simple}), where the discrete grid world is separated into two
clusters connected by a ``bottleneck'' state. Whenever the starting and reward locations are in different clusters, the bottleneck state, $s_{\ast}$, always precedes
the goal state, regardless of the trajectory taken. Hence, the
frequent predecessor state (e.g., $s_{\ast}$), to the goal state should be traversed despite the fact that immediate information gain by
traversing the state is minimal. In the absence of extrinsic reward, if only
utilising learned prospective information based on past experience (e.g., the norm of the
online-learned SR~\citep{machado2020count}), the intrinsic motivation for exploration is merely
local hence would discourage transitions into bottleneck states. However, the
retrospective information can be utilised to identify the state transitions that
connect different sub-regions of the state space, hence incorporating the
connectivity information of the state space into guiding exploration, allowing
the agent to escape local exploration and navigate towards
bottleneck states to reach distant regions. 

We develop \textit{Successor-Predecessor Intrinsic Exploration} (SPIE) algorithm utilising intrinsic rewards based
on both prospective and retrospective information from past trajectories. Below we provide
instantiations of SPIE based on the SR for discrete and continuous state
spaces. 

\textbf{SPIE in discrete state space}. We define the \textit{SR-Relative} (SR-R)
intrinsic reward, which is defined as the SR of the future state from the
current state minus the sum of the SRs of the future state from all states. Formally,
given a transition tuple, $(s, a, r, s', a')$, we define the SR-R intrinsic
reward as:
\begin{equation}
    r_{\text{SR-R}}(s, a) = \hat{\vec M}[s, s'] - ||\hat{\vec M}[:, s']||_{1} = 
    -\sum_{\tilde{s}\in\mathcal{S}, \tilde{s}\neq s} \hat{\vec M}[\tilde{s}, s']\,,
    \label{eq: sr-r}
\end{equation}
The above equation holds in deterministic MDPs (i.e., when $s'$ is a
function of $(s, a)$). We note that the $j$-th column of the SR matrix
represents the expected discounted occupancy to state $j$, starting from every
state, hence constituting a \textit{temporally backward} measure of the
accessibility of state $j$~\citep{bailey2022predecessor}. Therefore,
$r_{\text{SR-R}}(s, a)$ consists of both a prospective measure ($\hat{\vec M}[s,
s']$) and a retrospective measure ($||\hat{\vec M}[:, s']||_{1}$), and exploring
with $r_{\text{SR}}$ is an instantiation of SPIE in discrete MDPs. Intuitively,
$r_{\text{SR-R}}(s)$ can be interpreted as penalising transitions leading into
states $s'$ that are frequently reached from many states other than $s$,
hence providing an intrinsic motivation for guiding the agent towards states
that are harder to reach in general, e.g, boundary states and bottleneck
states. We thoroughly investigate the individual contribution of prospective and retrospective information through ablation studies in Appendix~\ref{sec: sarsa_srr_ablation}, and we observe that prospective information alone does not yield optimal exploration performance, whereas utilising only the retrospective information does not degrade exploration efficiency, indicating the importance of global topological information contained in the retrospective information for intrinsic exploration. 

In a closely related work, \citet{machado2020count} showed that
$r_{\text{SR}}(s) = 1/||\hat{\vec M}[s, :]||_{1}$ can be used as an intrinsic
reward that facilitates exploration in sparse reward settings. They
additionally showed that the row norm of the online-learned SR matrix implicitly approximates the state visitation counts, so the resulting
behaviour resembles count-based exploration.
However, a key issue associated with $r_{\text{SR}}$ is that the asymptotic
exploratory behaviour is uniformly random across all states, i.e., ${||\vec M[s,:]||_{1}\rightarrow 1/(1-\gamma), \forall s\in\mathcal{S}}$. We note that
exploration involves learning of both the environmental transition structure
$\mathcal{P}^{\pi}$ and the reward structure $\mathcal{R}$. Hence, were the SR
matrix to be known \textit{a priori} (hence $\mathcal{P}^{\pi}$ could be 
implicitly derived), no intrinsic motivation would be
introduced at any state and the resulting agent regresses back to random
exploration, omitting further efficient exploration for
learning $\mathcal{R}$. Since $r_{\text{SR-R}}$ contains the sum of columns of
the SR matrix, the asymptotic uniformity in $r_{\text{SR}}$ no longer holds,
yielding non-trivial intrinsic exploration even when the SR matrix is known and
fixed \textit{a priori}, allowing continual exploration for learning the reward
structure despite sparse extrinsic reinforcement.

\textbf{Analysis of $r_{\text{SR-R}}$ with pure exploration in grid-worlds}. We
examine exploration based on $r_{\text{SR-R}}(s)$ in
discrete grid-worlds with different topologies
(Fig.~\ref{fig: grids_demo}). We first consider pure
exploration in the absence of extrinsic reward, and evaluate the exploratory
behaviours of $4$ RL agents with different intrinsic rewards, in terms of their state
coverage. The agents we consider are: vanilla
SARSA~\citep{sutton2018reinforcement}; SARSA with $r_{\text{SR}}$
(SARSA-SR;~\citep{machado2020count}); SARSA with $r_{\text{FR}}(s) = ||F[s,
:]||_{1}$ (SARSA-FR;~\citep{moskovitz2021first}); and SARSA with
$r_{\text{SR-R}}$ (SARSA-SRR); the pseudocode for SARSA-SRR can be found in
Appendix). We consider $4$ different grid-world environments with different
configurations (Figure~\ref{fig: grids_demo}), namely, $10\times 10$ open-field grid
(\textit{OF-small}); $10\times10$ grid with two rooms (\textit{Cluster-simple}); $10\times10$ grid
with $4$ rooms (\textit{Cluster-hard}); and $20\times20$ open-field grid (\textit{OF-large}). 

Exploration efficiency is quantified as the number of timesteps taken to cover $50\%$, $90\%$ and $99\%$ of the state space. The
value estimates for all states are initialised to be $0$. Due to the
absence of extrinsic reward, the vanilla SARSA agent is equivalent to a random
walk policy, which acts as a natural baseline. We observe from Figure~\ref{fig:
grids_coverage} that SARSA-SRR yields the fastest coverage of the state space
amongst all considered agents. The SARSA-FR agent yields similar state coverage
efficiency as SARSA. SARSA-SR performs poorly in all 4 grid-worlds,
failing to achieve $50\%$ state coverage within $8000$ timesteps in all
environments other than the simplest one (\textit{OF-small}). 
Moreover, we observe that SARSA-SRR performed consistently across the $4$ considered grid
configurations, whereas all other agents experienced
significant degradation in exploration efficiency as the size and complexity of
the environments increase. 

\begin{figure*}[t!]
    \centering
    \begin{subfigure}[b]{\linewidth}
        \centering
        \includegraphics[width=.95\linewidth]{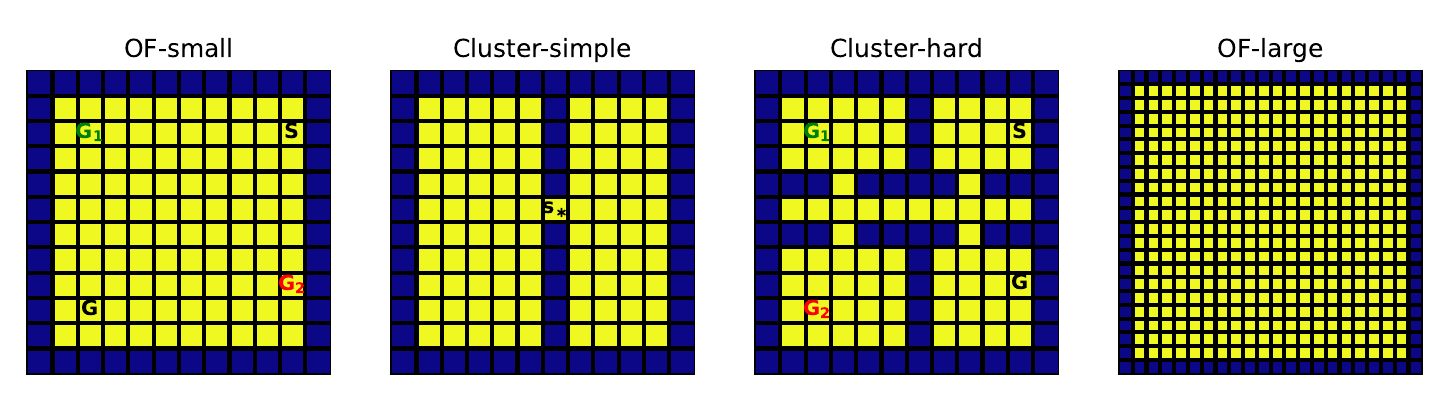}
        \caption{}
        \label{fig: grids_demo} 
    \end{subfigure}
    \begin{subfigure}[b]{\linewidth}
        \centering
        \includegraphics[width=\linewidth]{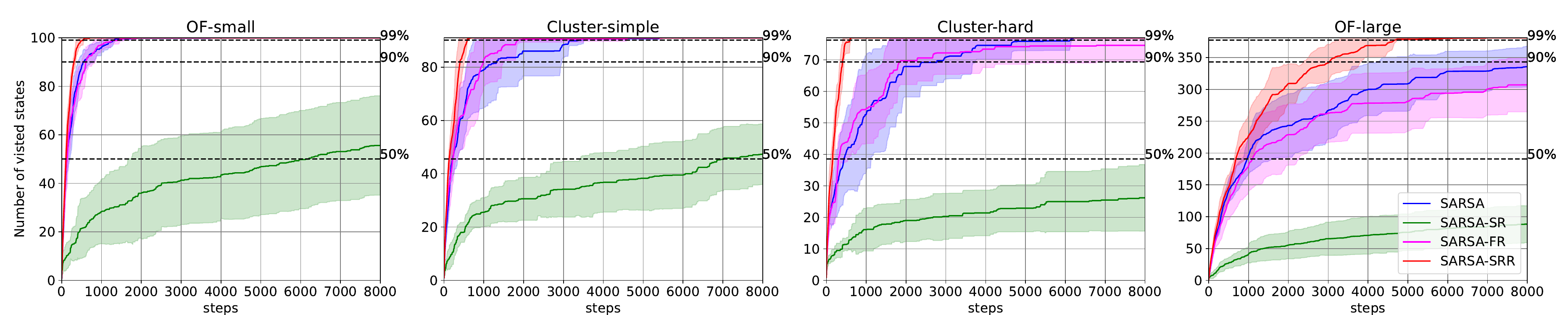}
        \caption{}
        \label{fig: grids_coverage}
    \end{subfigure}
    \begin{subfigure}[b]{\linewidth}
        \centering
        \includegraphics[width=\linewidth]{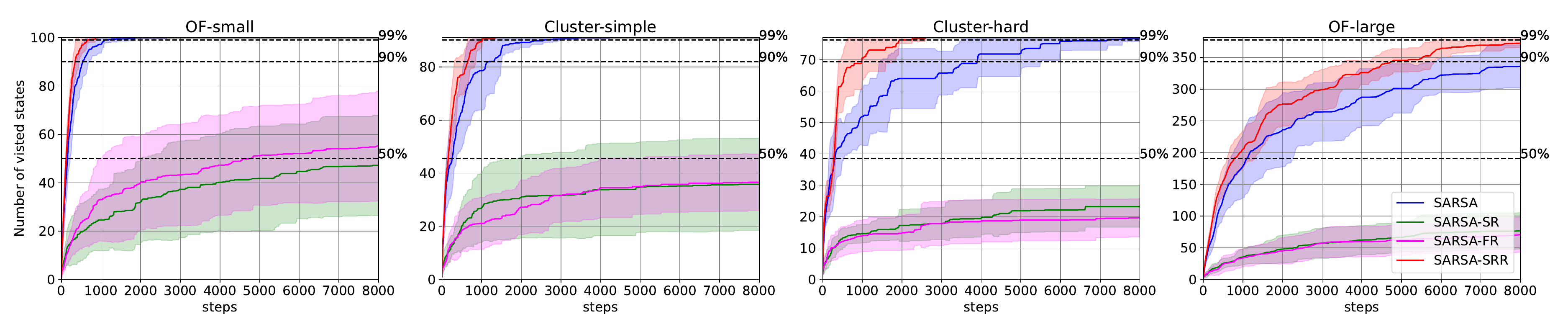}
        \caption{}
        \label{fig: grids_coverage_full_sr}
    \end{subfigure}
    \caption{\textbf{Evaluation of exploration efficiency in grid worlds.} (a)
    Grid worlds with varying size and complexity.
    `S' and `G' in \textit{OF-small} and \textit{Cluster-hard} represents the
    start and goal states in the goal-oriented reinforcement learning task; colored $G_{1}$ and $G_{2}$ in \textit{OF-small} and \textit{Cluster-hard} represent the changed goal locations (see the non-stationary reward experiment in Section~\ref{sec: experiments}), $s_{\ast}$ in \textit{Cluster-simple} denote
    the bottleneck state. (b-c) Accumulated number of states
    visited against exploration timesteps, for all considered agents in all
    grid-worlds in with (a) online-learned SR matrix (b) and fixed SR matrix
    (c). All reported results are averaged over $10$ random seeds (shaded area
    denotes mean $\pm$ 1 standard error). Hyperparameters can be found in Appendix.}
    \label{fig: grids_demo_coverage}
\end{figure*}

We note that in addition to improved exploration efficiency, SARSA-SRR exhibits ``cycling'' behaviour in pure
exploration in the $20\times 20$ two-cluster environment (Figure~\ref{fig:
sra_20x20_full}), spending the majority of its time
exploring in one cluster and periodically traverses the ``bottleneck''
states to explore the opposing clusters upon sufficient coverage of the current
cluster. Such ``cycling'' strategy exhibits
short-term memory of recent states and consistent long-term
planned exploration towards regions more distant in history. This is
potentially advantageous for environments with non-stationary reward structures
(\citep{thiebaux2006decision}), such as real-world
foraging, which require continual exploration for
identifying new rewards. We verify the capability of SARSA-SRR for dealing with non-stationary reward structure in Section~\ref{sec: experiments} (Figure~\ref{fig: grid_evals}).


The complexity of analysing the properties of SARSA-SRR is
two-fold: the online learning of the SR matrix and the online update of the
Q-values. By assuming the SR matrix is known and fixed throughout
training,\footnote{We assume the policy the fixed SR matrix is dependent on is
the random walk policy unless otherwise stated.} we observe from
Figure~\ref{fig: grids_coverage_full_sr} that SARSA-SRR consistently outperforms
all competing methods, similar to what we observed when the SR (FR) matrix is
learned online. Additionally, we observe that the exploration efficiency for all
three intrinsic exploration agents drops when using the
intrinsic reward constructed with the fixed SR (FR), but SARSA-SRR yields
minimal decrease comparing to the significant degradation with SARSA-SR and
SARSA-FR. Hence, we have empirically confirmed that the improved exploration
efficiency does not stem solely from the online learning of the SR matrix, but
is a property of $r_{\text{SR-R}}$. Another long-standing issue with many existing intrinsic exploration methods is the non-stationary nature of the associated intrinsic bonus. By fixing the SR (FR) matrix, the associated $r_{\text{SR-R}}$ is stationary whilst still yielding high exploration efficiency, hence validating the utility of SPIE.


\textbf{SPIE in continuous state space with deep RL}.
In order to generalise $r_{\text{SR-R}}$ to continuous state space, we replace the SR with successor features
(SF;~\citep{barreto2017successor}).
\begin{equation}
\textstyle
        \vec\psi^{\pi}(s, a) = \mathbb{E}\left[\sum_{k=0}^{\infty}
        \gamma^k\vec\phi_{t+k}|s_{t}=s, a_{t}=a\right]
        = \vec\phi(s_{t+1}) + \mathbb{E}\left[
            \vec\psi^{\pi}(s_{t+1}, \pi(s_{t+1}))|s_{t}=s, a_{t}=a\right]
\end{equation}
where $\vec\phi(s, a)$ is a feature representation such that $r(s, a) =
\vec\phi(s, a)\cdot \vec w$, with weight parameter $\vec w$. The recursive
formulation for SF admits gradient-based learning of $\phi$ by minimising the
following squared TD loss.
\begin{equation}
    \delta^{\text{SF}}_{t} = \mathbb{E}\left[
        \left(\phi(s_{t}, a_{t}) + \gamma\psi(s_{t+1}, a_{t+1}) - \psi(s_{t}, a_{t})\right)^2
    \right]\,,
    \label{eq: td_sf}
\end{equation}
where the transition tuple $(s_{t}, a_{t}, s_{t+1}, a_{t+1})$ can be taken from
either online samples (SARSA-like) or sampled from offline trajectories
(Q-learning-like). We previously noted that the column of the SR matrix provides
a marginal retrospective accessibility of states, facilitating stronger
exploration. However, there is no SF-analogue of the column of the SR matrix. We
therefore construct the retrospective exploration objective with the
\textit{Predecessor Representation} (PR), which was proposed to measure how
often a given state is preceded by any other state given the expected cumulative
discounted preceding occupancy~\citep{namboodiri2021learning}. The formal
definition for the PR matrix under discrete MDP, $\vec
N\in\mathbb{R}^{|\mathcal{S}|\times|\mathcal{S}|}$, is defined as following.
\begin{equation}
    \vec N[s, s'] = \mathbb{E}_{\tilde{\mathcal{P}}^{\pi}}\left[
        \sum_{\tau=0}^{n}\gamma^{\tau}\mathds{1}(s, s_{n-\tau})|s_{n} = s'
        \right] = 
        \mathbb{E}_{\tilde{\mathcal{P}}^{\pi}}\left[
            \mathds{1}(s, s_{n}) + \gamma \vec N[s, s_{n-1}]
        \right],
    \label{eq: pr_def}
\end{equation}
where the expectation is based on $\tilde{\mathcal{P}}^{\pi}(s_{t}=s|s_{t+1}=s')
= \frac{\mathcal{P}^{\pi}(s, s')z(s)}{z(s')}$, the \textit{retrospective}
transition model, and $z(s) = \lim_{t\rightarrow
\infty}\mathbb{E}_{\mathcal{P}^{\pi}}[\mathds{1}(s_{t}=s)]$, denotes the
stationary distribution given policy $\pi$.

Utilising the recursive formulation for the PR matrix, we can again derive a
TD-learning rule. Namely, given the
transition tuple, $(s_{t}, a_{t}, r_{t}, s_{t+1}, a_{t+1})$, we have the
following update rule.
\begin{equation}
    \hat{\vec N}'[\tilde{s}, s_{t+1}] = \hat{\vec N}[\tilde{s}, s_{t+1}] + 
        \alpha \delta_{t}^{\vec N}, \quad 
    \delta_{t}^{\vec N} = \mathds{1}(s_{t+1}, \tilde{s}) + 
    \gamma\hat{\vec N}[\tilde{s}, s_{t}] - \hat{\vec N}[\tilde{s}, s_{t+1}].
\end{equation}
The SR and PR have a reciprocal relationship (proof in appendix):
\begin{equation}
    \vec N \text{diag}(\vec z) = \text{diag}(\vec z) \vec M\,, 
    \label{eq: sr_pr_reciprocal}
\end{equation} 
where $\text{diag}(\vec z) \in \mathbb{R}^{|\mathcal{S}|\times|\mathcal{S}|}$
denotes the diagonal matrix whose diagonal elements corresponds to the discrete
stationary distribution of the MDP under the current policy. 

Similar to how SF generalises SR, we propose the ``Predecessor Feature'' (PF)
that generalises PR.
\begin{equation}
\textstyle
    \vec\xi^{\pi}(s) = \mathbb{E}\left[\sum_{k=0}^{\infty}\gamma^k 
    \mu_{t-k}|s_{t+1}=s\right]= \vec\mu(s_{t+1}) + \gamma\mathbb{E}\left[\vec\xi^{\pi}(s_{t})|s_{t+1}=s, 
    a_{t}=a\right].
\end{equation}
Similarly to the SF, the recursive definition of the PF again allows a simple
expression of the TD error for gradient-based learning of the PF.
\begin{equation}
    \delta^{\text{PF}}_{t} = \mathbb{E}\left[
        \left(\phi(s_{t+1}) + \gamma \xi(s_{t}) - \xi(s_{t+1})\right)
    \right]\,,
    \label{eq: td_pf}
\end{equation}
We utilise the norms of SF and PF to replace the row sums in discrete settings for tractable approximation to $r_{\text{SR-R}}$
in continuous state spaces. We use the same feature vector, $\phi$, for computing the SF and PF. In order to ensure the SF and PF are of similar scales across the state space, we normalise $\phi(s)$ such that $||\phi(s)||_2 = 1$ for all $s$.  Contrary to how we define $r_{\text{SR-R}}$ as the
difference between the SR and the column sum of the SR in discrete
MDPs\footnote{Note we refer to discrete/continuous MDP as an MDP with
discrete/continuous state and action space.}, we find that setting the intrinsic reward as the
difference between the reciprocal of the norms of the SF and the PF yields better empirical performance. We hence
define the continuous Successor-Predecessor intrinsic reward as follows.
\begin{equation}
    r_{\text{SF-PF}} = \frac{1}{||\vec\phi(s_{t+1})||_{1}} - \frac{1}{||\vec\psi(s_{t}, a_{t})||_{1}}
\end{equation}

\begin{wrapfigure}{r}{0.5\textwidth}
    \centering
    \includegraphics[width=\linewidth]{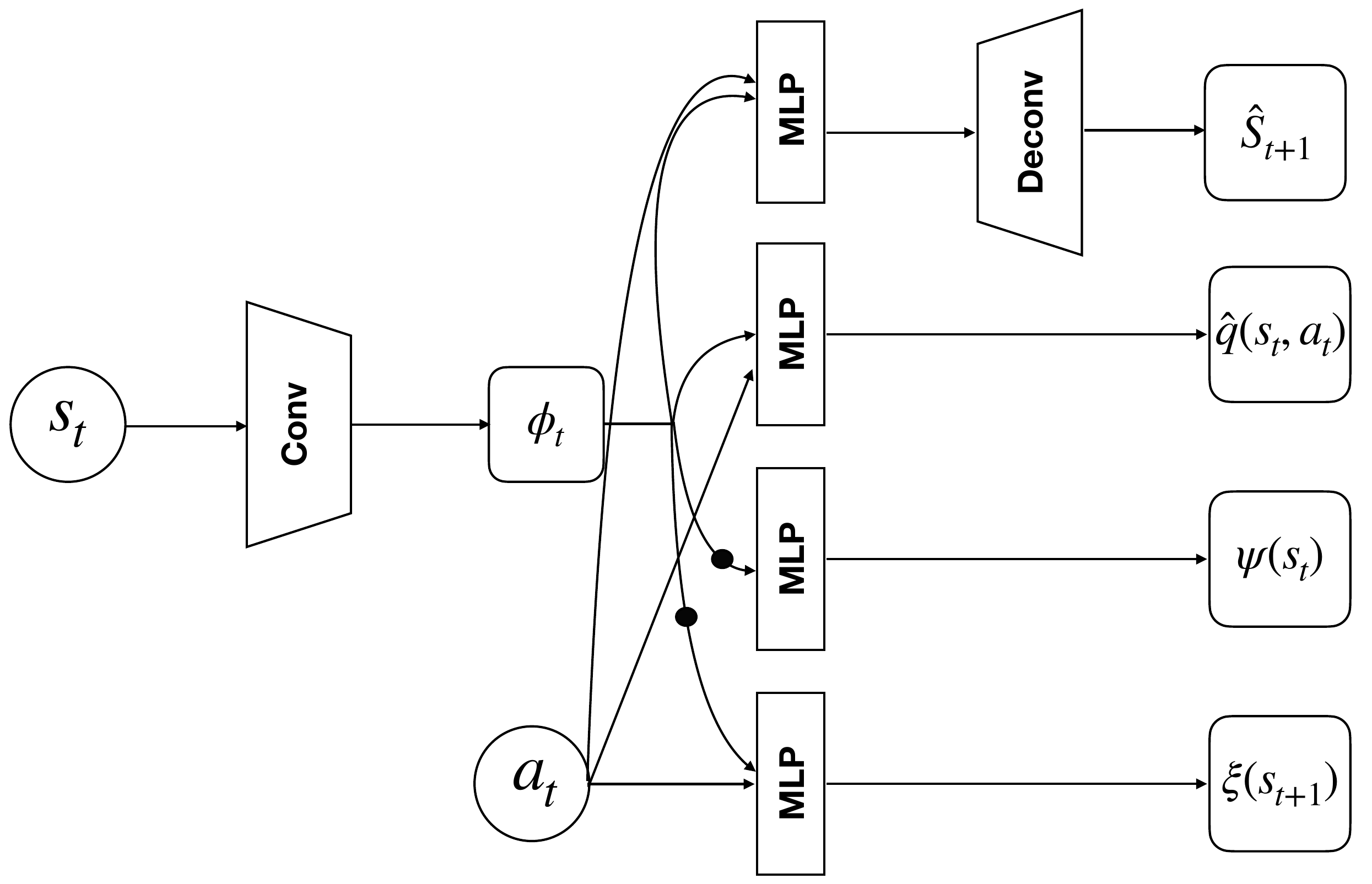}
    \caption{Graphical illustration of the neural network architecture of 
    DQN-SF-PF for Atari games. Note that the state feature vector is L2-normalised, $\phi(s) = \frac{\tilde{\phi}(s)}{||\tilde{\phi}(s)||_{2}}$, where $\tilde{\phi}(s)$ is the raw output of the convolutional encoder.}
    \label{fig: dqn_demo}
    \vspace{-15pt}
\end{wrapfigure}
\textbf{Details of deep RL implementation of $r_{\text{SF-PF}}$}. We instantiate
$r_{\text{SF-PF}}$ based on a Deep Q Network (DQN; \citep{mnih2013playing}), with
auxiliary predictive reconstruction task ($\mathcal{L}_{\text{recon}} =
\mathbb{E}\left[\left(s_{t+1}-\hat{s}_{t+1}\right)^2|s_{t}\right]$, where
$\hat{s}_{t+1}$ is the predicted next state), and separate heads for computing
the q-values, the SF, and the PF, respectively (Figure~\ref{fig: dqn_demo}). We
call this model DQN-SF-PF. Note that, following~\citet{machado2020count}, the
intermediate feature representation $\phi$ is trained given only the predictive
reconstruction and value learning supervisions, and not updated given the TD
error in the learning of the SF or the PF (the filled black circle in
Figure~\ref{fig: dqn_demo} indicating the \texttt{stop\_gradient} operation). We
adopt the same set of hyperparameters and architecture for the DQN as reported
in~\citet{oh2015action}. To make the comparison consistent, we utilise the mixed
Monte-Carlo return loss~\citep{bellemare2016unifying, machado2020count}, defined
as following.
\begin{equation}
    \begin{split}
        \mathcal{L}_{q} &= \mathbb{E}\left[
        \left((1-\tau)\delta_{\text{TD}}(s, a) + \tau \delta_{\text{MC}}(s, a)\right)^2
    \right]\,, \\
    \text{ where } \delta_{\text{MC}}(s, a) &= \sum_{t=0}^{\infty}\gamma^t\left(r(s_{t}, a_{t}) + 
    \beta r_{\text{SF-PF}}(s_{t}, a_{t}; \theta^{-})\right) - q(s, a; \theta)\,,
    \end{split}
\end{equation}
where $\delta_{\text{TD}}$ denotes the standard TD error for q-values
(Eq.~\ref{eq: td_q}), $\tau$ is the scaling factor controlling the contribution
of the TD error and the Monte Carlo error, and $\theta$ and $\theta^{-}$ denote
the parameters for the online and target DQN-SF-PF, respectively. Hence the
overall loss objective for training DQN-SF-PF is as following.
\begin{equation}
    \mathcal{L}_{\text{DQN-SF-PF}} = w_{\text{q}}\mathcal{L}_{q} + w_{\text{SF}}\delta^{\text{SF}} + 
    w_{\text{PF}}\delta^{\text{PF}} + w_{\text{recon}}\mathcal{L}_{\text{recon}}\,,
    \label{eq: loss_dqn_sf_pf}
\end{equation}
where $w_{\text{q/SF/PF/recon}}$ denotes the scaling factors for the respective
loss terms. The complete set of hyperparameters for DQN-SF-PF can be found in
the Appendix.

\section{Experiments}
\label{sec: experiments}

\textbf{Classical hard exploration tasks}. We evaluate performance of the
discrete SPIE agent (and other considered agents in Section~\ref{sec: method}: SARSA, SARSA-SR, SARSA-FR) on two classical hard exploration
tasks commonly studied in the PAC-MDP literature, \textit{RiverSwim}
and \textit{SixArms}~\citep{strehl2008analysis} (appendix Figure~\ref{fig: tabular_task}).
In both tasks, environment transition dynamics induce a bias towards states
with low rewards, leaving high rewards in states that are harder to reach. Evaluation of the agents is based on the
cumulative reward collected within $5000$ training steps. 


We observe from Table~\ref{tab: tabular_tasks} that SARSA-SRR significantly
outperforms all other considered agents. Moreover, in order to further justify the utility of $\mathcal{R}_{\text{SR-R}}$ in driving exploration,
we run ablation studies by evaluating the performance of variants of SARSA-SRR (Appendix~\ref{sec: sarsa_srr_ablation}). Ablation studies reveal the importance of combining both prospective and retrospective information for exploration, as well as the benefits of dynamic balancing exploring uncertain states and bottleneck states.

In order to validate the replacement of column norm of the SR with column norm
of the PR in the construction of $r_{\text{SR-R}}$, given the reciprocal
relationship (Eq.~\ref{eq: sr_pr_reciprocal}), we empirically evaluate the
performance of the SARSA agent with the alternative intrinsic reward,
$r_{\text{SR-PR}}(s, a)=\hat{\vec M}[s, s'] - ||\hat{\vec N}[:, s']||_{1}$. SARSA-SR-PR yields comparable performance as SARSA-SRR on both
RiverSwim and SixArms (Table~\ref{tab: tabular_tasks}), empirically justifying the instantiation of SPIE
with the PR for capturing the retrospective information.

{
\begin{table}\small
\caption{Evaluations SARSA-SRR and related baseline agents on RiverSwim and SixArms (averaged over 100 seeds,
    numbers in the parentheses represents standard errors).}
  \renewcommand{\arraystretch}{1.2}
  \centering
  \begin{tabular}{cccccc} 
    \hline
         & SARSA & SARSA-SR & SARSA-FR & SARSA-SRR
         & SARSA-SR-PR \\ [0.5ex] 
        \hline\hline
        RiverSwim & 25,075  & 1,197,075 & 1,547,243  &
        $\mathbf{2,547,156 }$ & $\mathbf{2,857,324}$ \\ 
           & (1,224) & (36,999) & (34,050) & (479,655) & (419,922)\\
        SixArms & 376,655  & 1,025,750  & 119,149  & $\mathbf{2,199,291
        }$ & $\mathbf{1,845,229}$ \\
        & (8,449) & (49,095) & (42,942) & (1,024,726) & (1,032,050)\\  
        \hline
  \end{tabular}
  \vspace{1ex}
  \label{tab: tabular_tasks}
\end{table}
}

\textbf{Goal-oriented / sparse-reward tasks}. We next evaluate the agents on
grid world tasks with a single terminal goal state (Figure~\ref{fig:
grids_demo}; \textit{OF-small} and \textit{Cluster-hard}). All non-terminal transitions yield
rewards of $-1$, and transitions into the goal state generates a reward of $0$.
Such goal-directed or sparse-reward tasks require
efficient exploration. We examine both open-field and
clustered grid-worlds. In \textit{OF-small} and
\textit{Cluster-hard} tasks, SARSA-SRR outperforms both vanilla
SARSA and SARSA-SR in terms of sample efficiency (Figure~\ref{fig: grid_evals}).
In addition, SARSA-SRR yields more stable training and performance is
more robust across different random seeds. Note that the
navigation performance of SARSA-SR during training is highly unstable, which
might attribute to its equivalence to count-based exploration given that
visitation count is only a local measure for exploration.
Somewhat surprisingly, the improvement for SARSA-SRR is more significant in open-field grid
world (\textit{OF-small}) rather than the clustered grid world (\textit{Cluster-hard}), in contrast to
the pure exploration experiments (Figure~\ref{fig: grids_coverage}).
Nevertheless, the improvement is strong and consistent. 

In many real-world tasks, the environment is inherently dynamic, requiring continual
exploration for adapting the optimal policy with respect to the
non-stationary task structure. One such example is random foraging, where foods are depleted upon
consumption, and new rewards appear in new locations. As argued in
Section~\ref{sec: method}, SARSA-SRR yields ``cycling'' exploratory
behaviour (Figure~\ref{fig: grid_20x20_full}), hence could facilitate continual
exploration that is potentially suitable for such non-stationary
environments. To empirically justify the hypothesis, we consider the Non-Markovian Reward Decision Process
(NMRDP;~\citep{thiebaux2006decision}), where the reward changes dynamically given
the visited state sequence. We instantiate the NMRDPs in the
grid worlds, \textit{OF-small} and \textit{Cluster-hard}, where there are three reward states
($G$, ${\color{green}G_{1}}$, ${\color{red}G_{2}}$; Figure~\ref{fig:
grids_demo}) that are sequentially activated (and deactivated) every $30$
episodes. As shown in Figure~\ref{fig: of_small_non_stationary} and \ref{fig:
cluster_hard_non_stationary},  we observe that SARSA-SRR consistently
outperforms SARSA and SARSA-SR, reaching the new goal states in increasingly
shorter timescales. This supports our idea that SPIE provides a more ethologically plausible
exploration strategy for dealing with non-stationarity. However, we note that the main focus of the current paper is on improved exploration within a single task, instead of over a stream of inter-related tasks. Here we provide preliminary evidence of potential applicability of SPIE in such continual exploration setting, and we leave more rigorous investigation in this direction for future work.

\begin{figure}[t!]
    \centering
    \begin{subfigure}[b]{0.45\linewidth}
        \centering
        \includegraphics[width=\textwidth]{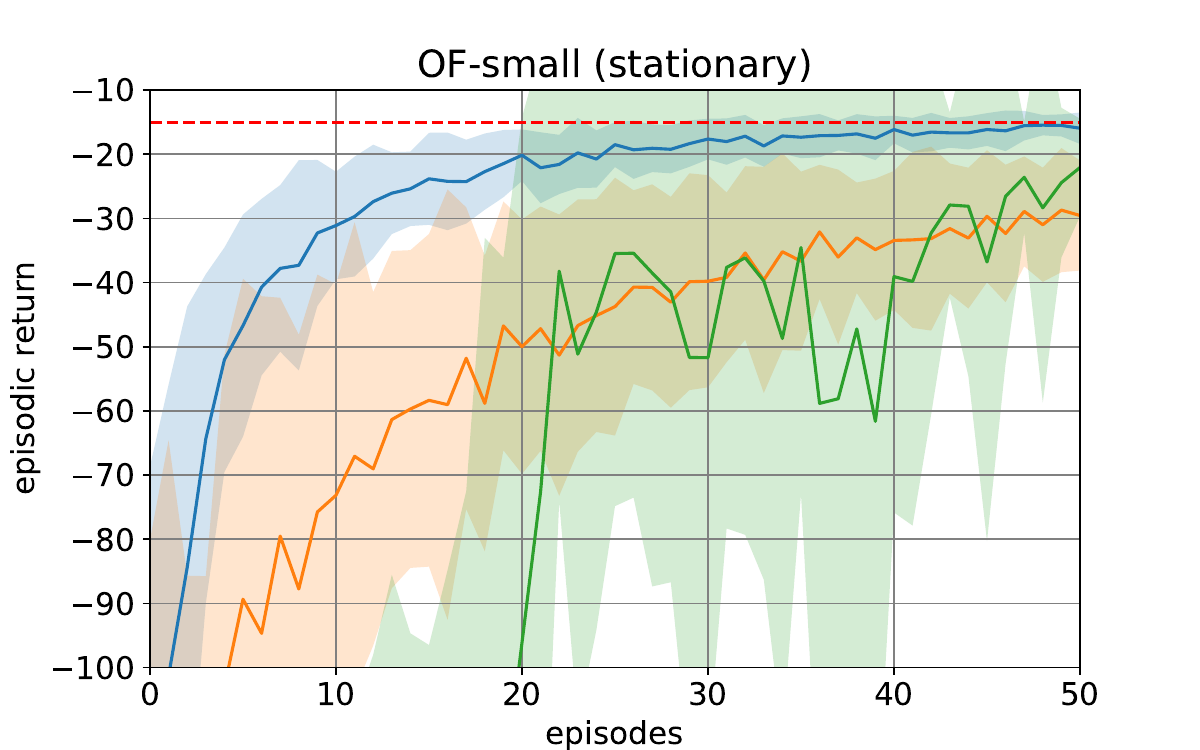}
        
        \caption{}
        
        \label{fig: of_small_stationary}
    \end{subfigure}
    \begin{subfigure}[b]{0.45\linewidth}
        \centering
        \includegraphics[width=\textwidth]{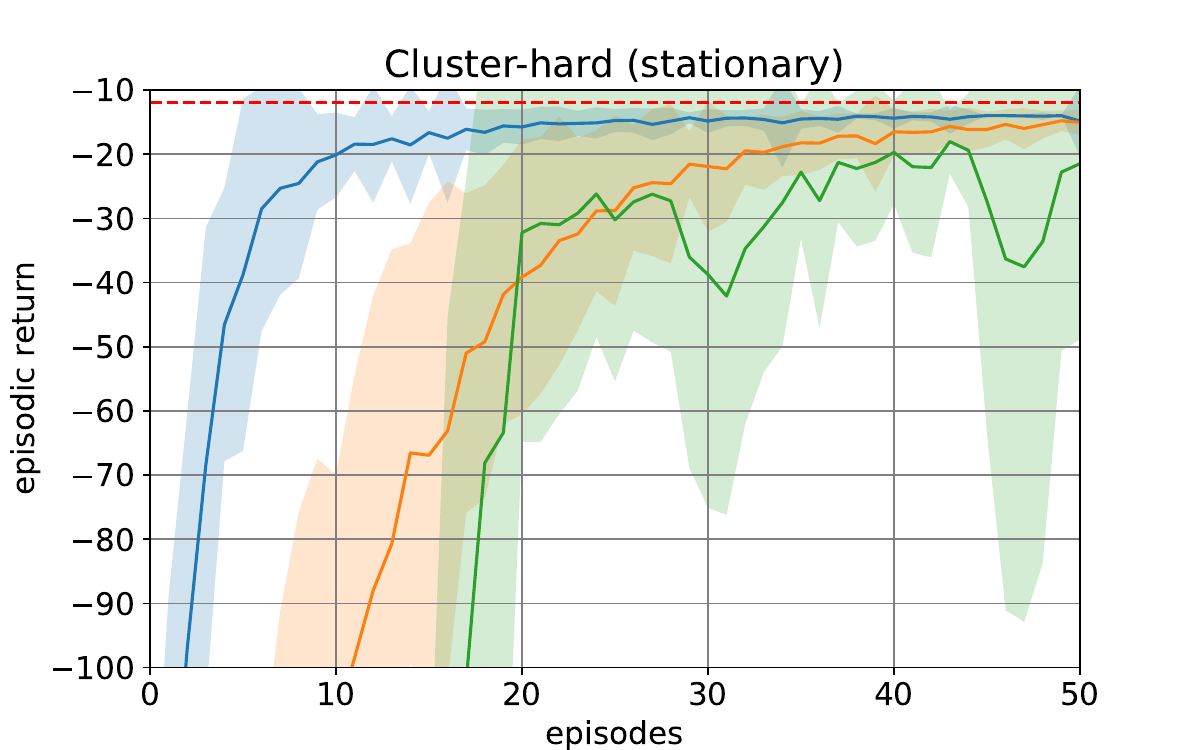}
        
        \caption{}
        
        \label{fig: cluster_hard_stationary}
    \end{subfigure}
    \begin{subfigure}[b]{0.45\linewidth}
        \centering
        \includegraphics[width=\textwidth]{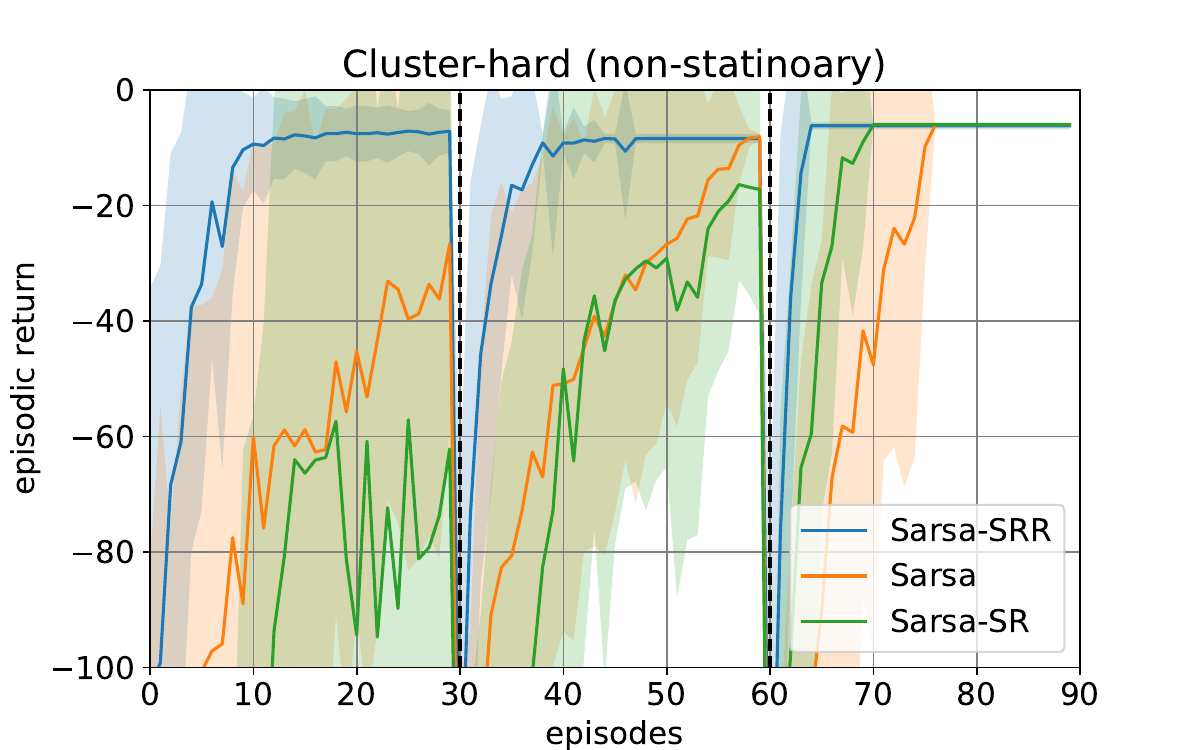}
        
        \caption{}
        \label{fig: of_small_non_stationary}
    \end{subfigure}
    \begin{subfigure}[b]{0.45\linewidth}
        \centering
        \includegraphics[width=\textwidth]{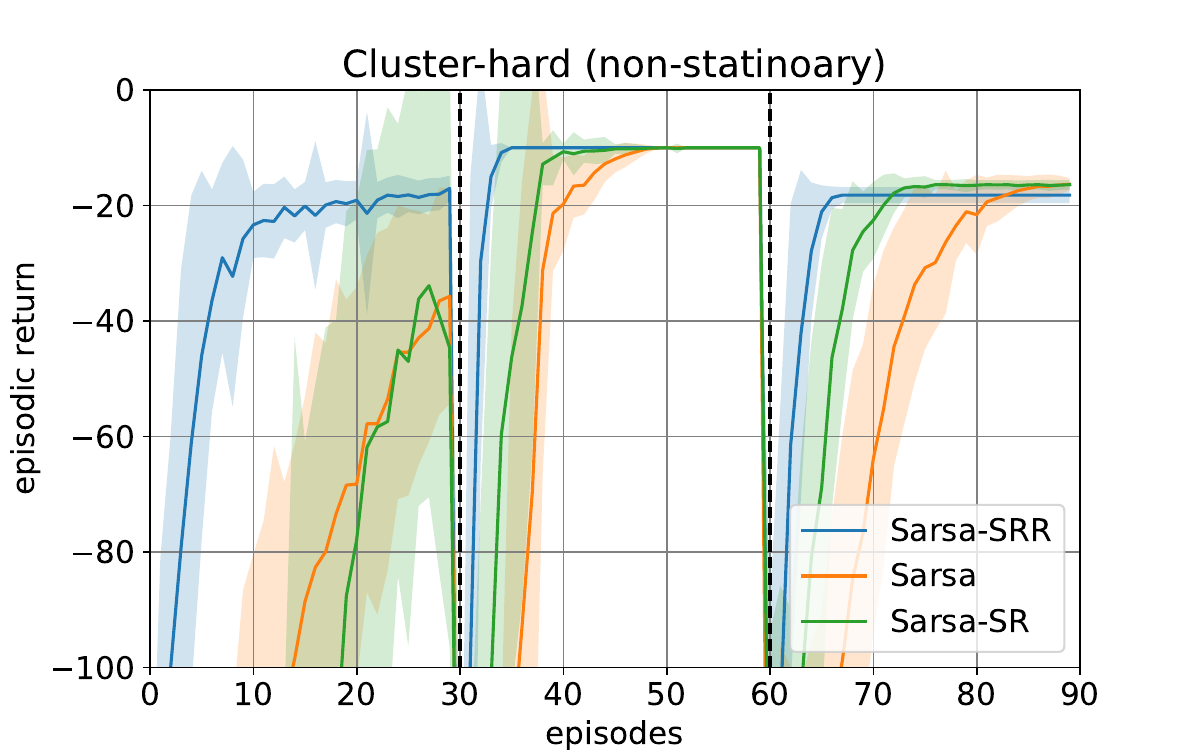}
        \caption{}
        \label{fig: cluster_hard_non_stationary}
    \end{subfigure}
       \caption{\textbf{Goal-oriented navigation in grid worlds.} Evaluations of SARSA, SARSA-SR and SARSA-SRR on \textit{OF-small} (a) and \textit{Cluster-hard} (b) grid worlds (Figure~\ref{fig: grids_demo}) with stationary reward structure, and on \textit{OF-small} (c) and \textit{Cluster-hard} (d) with \textit{non-stationary} reward structures. The red dashed horizontal line represents the shorted path distance. The black dashed vertical lines represent the time point at which the goal change occurs.}
       \label{fig: grid_evals}
\end{figure}

\textbf{Linear function approximation for continuous state spaces}. We next evaluate  SPIE
with function approximation. As a first step, we consider the linear features
before moving onto the deep RL setting. We consider the \textit{MountainCar}
task (Figure~\ref{fig: mountainCar_demo}; \citep{Moore90efficientmemory-based}),
with sparse reward structure, where we set the reward to $0$ for all transitions into
non-terminal states (the terminal state is indicated by the flag on the top of
the right hill). We utilise Q-learning with linear function approximation, where
we define the linear features to be the $128$-dimensional random Fourier
features (RFF~\citep{rahimi2007random}; Figure~\ref{fig: rff_demo}). The SF and
the PF are defined given the RFF, and are learned via standard TD-learning
(Eq.~\ref{eq: td_sf};~\ref{eq: td_pf}). The performance (over the
first $1000$ training episodes) of the resulting linear-Q agents with
$r_{\text{SF}}$ and $r_{\text{SF-PF}}$ is shown in Figure~\ref{fig:
mountainCar_comparison}. The agent with $r_{\text{SF-PF}}$
outperforms the opposing agent significantly, empirically justifying the
utility of SPIE in the linear function approximation regime.

\begin{figure}[t]
    \centering
    \begin{subfigure}[b]{0.3\linewidth}
        \centering
        \includegraphics[width=\textwidth]{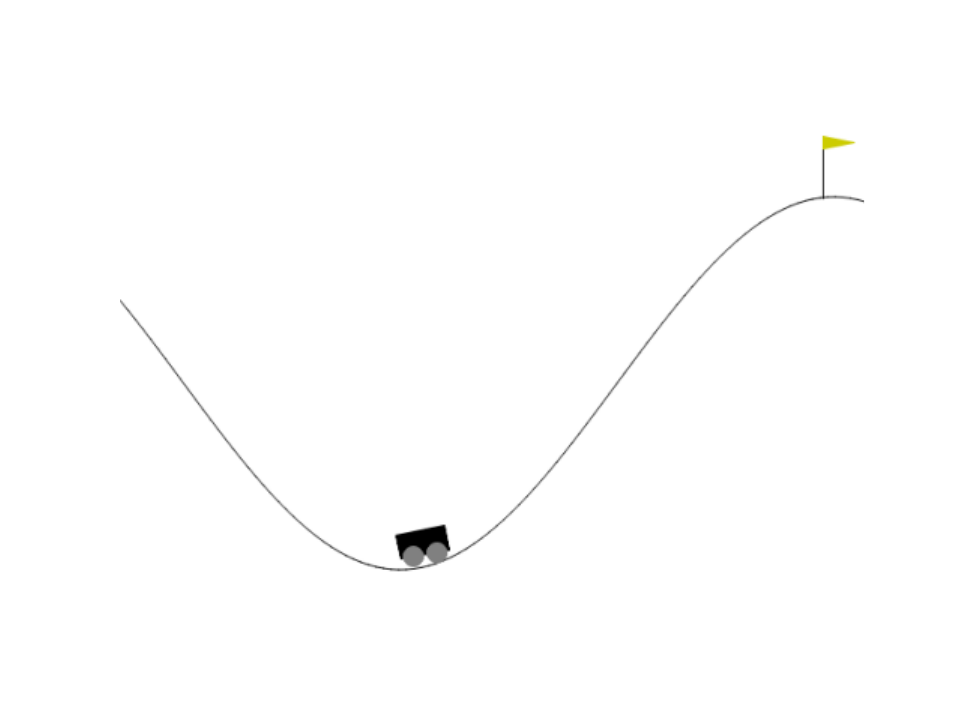}
        \caption{}
        \label{fig: mountainCar_demo}
    \end{subfigure}
    \hfill
    \begin{subfigure}[b]{0.23\linewidth}
        \centering
        \includegraphics[width=\textwidth]{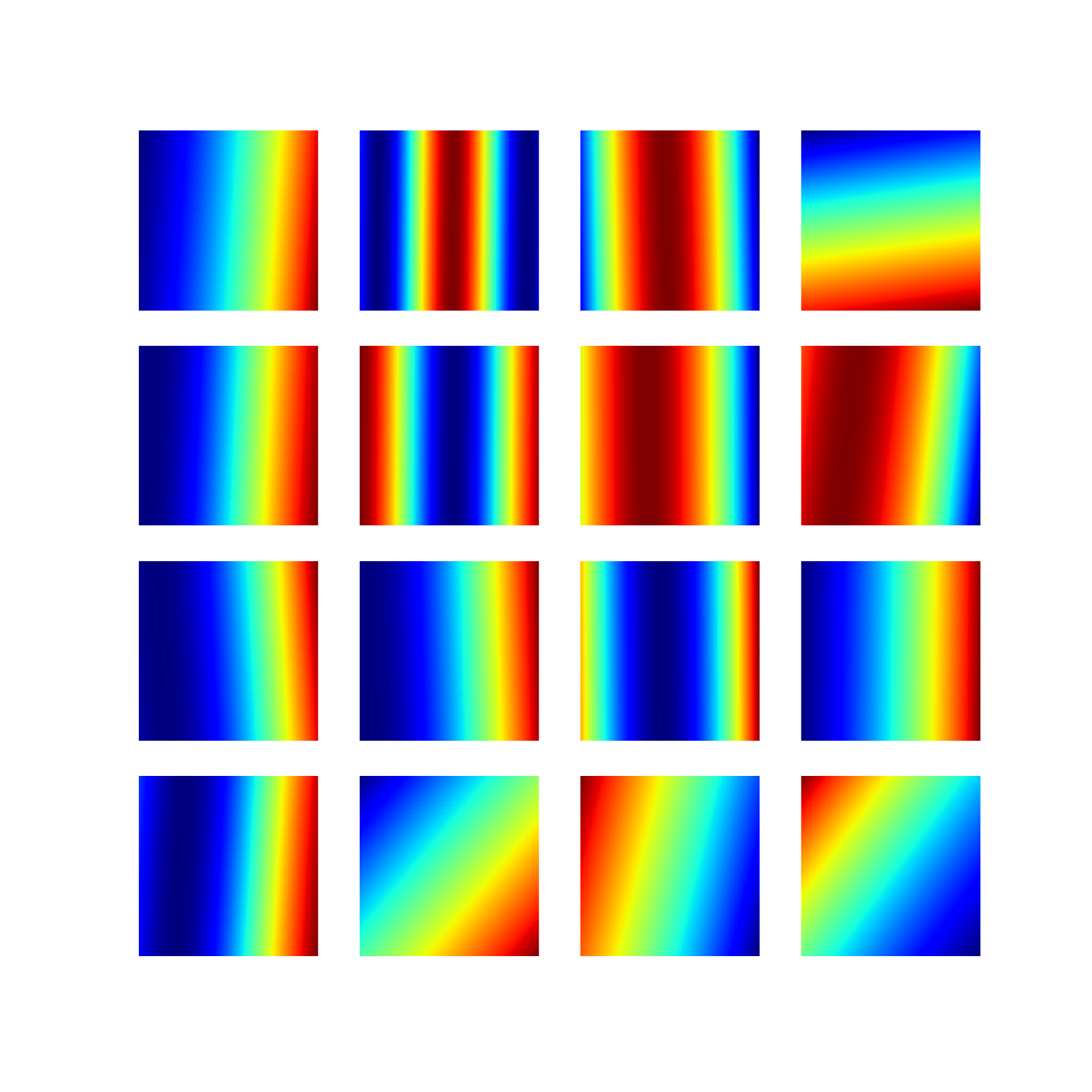}
        \caption{}
        \label{fig: rff_demo}
    \end{subfigure}
    \hfill
    \begin{subfigure}[b]{0.37\linewidth}
        \centering
        \includegraphics[width=\textwidth]{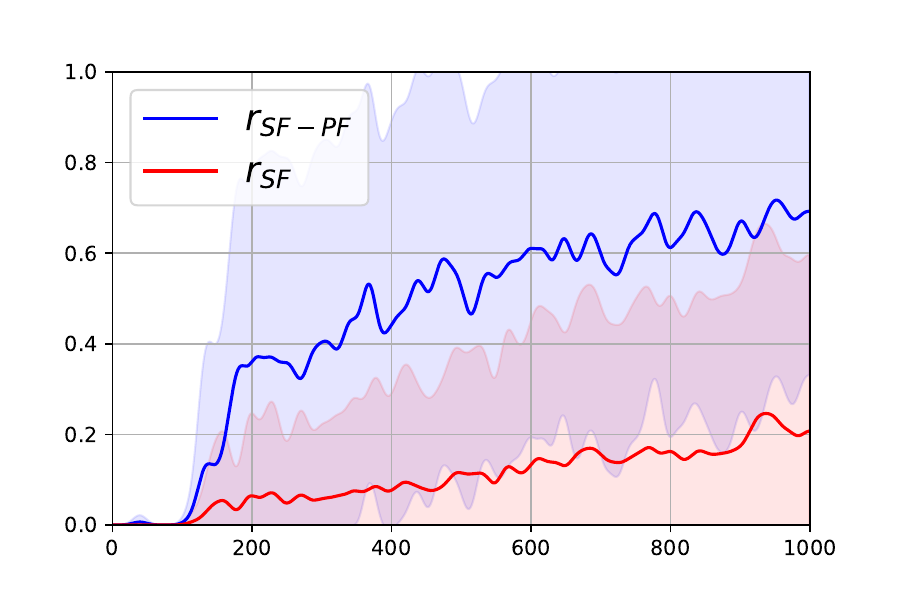}
        \caption{}
        \label{fig: mountainCar_comparison}
    \end{subfigure}
    \caption{\textbf{Evaluation of SPIE with linear features in MountainCar.}
    (a) Graphical demonstration of MountainCar environment; (b); Example random Fourier
    features; (c) Evaluations of Q-learning with linear function approximation
    with intrinsic rewards $r_{\text{SF}}$ and $r_{\text{SF-PF}}$ on
    MountainCar. Reported results are averaged over $10$ random seeds.}
    \label{fig: mountainCar_experiment}
\end{figure}

\textbf{Deep RL instantiation of SPIE in Atari games}. We empirically evaluate $\text{DQN}_{\text{SF-PF}}$ on $6$
Atari games with sparse reward structures~\citep{bellemare2013arcade}: Freeway,
Gravitar, Montezuma's Revenge, Private Eye, Solaris, and Venture. We follow the evaluation protocol
as stated in~\citet{machado2018revisiting}, where we report the averaged
evaluation scores over $10$ random seeds given $10^{8}$ training steps.
The agent takes (stacked) raw pixel observations as inputs. Across all $4$
games, the $\beta$ values are set to $0.07$ and the discounting factor
$\gamma=0.995$. We adopt the epsilon-annealing scheme
as in~\citep{bellemare2016unifying}, which linearly decreases $\epsilon$ from $1.0$
to $0.1$ over the first $10^6$ frames. We train the network with RMSprop, with
standard hyperparameters, learning rate $0.00025$, $\epsilon = 0.001$ and decay equals $0.95$~\citep{mnih2013playing}. The discounting factors for value learning and online learning of the SF and the PF are set to $0.99$. The scaling factors in
Eq.~\ref{eq: loss_dqn_sf_pf} are set such that the different losses are on
roughly similar scales: $w_{q} = 1$, $w_{\text{SF}} = 1500$, $w_{\text{PF}} =
1500$, $w_{\text{recon}} = 0.001$. More implementation details can be found in Appendix. 

We compare DQN-SF-PF
with vanilla DQN trained with standard TD error, vanilla DQN trained with the
MMC loss ($\mathcal{L}_{q}$), Random Network Distillation
(RND;~\citep{burda2018exploration}), DQN-SR trained with the MMC
loss~\citep{machado2020count} (Table~\ref{tab: atari}). All agents are trained with the predictive
reconstruction auxiliary task. By comparing with our main baseline, DQN-SR, we
observe that DQN-SF-PF significantly outperforms DQN-SR on Four games
(Gravitar, Montezuma's Revenge, Private Eye and Solaris), whilst yielding similar performance on
the remaining two games (Freeway and Venture). Moreover, DQN-SF-PF outperforms
RND, a state-of-the-art Deep RL algorithm for exploration, on all $6$ games. 
The empirical difference is not only reflected in the asymptotic
performance, but also in the sample efficiency of learning. Specifically, for
Montezuma's Revenge, one of the hardest exploration games in the Atari suite,
our agent achieves near asymptotic performance (defined as the score given
$10^8$ training steps) with only $\sim8\times 10^6$ training frames, whereas
the performance of DQN-SR saturates at $\sim 2.4\times 10^7$ training frames (with a lower score). We emphasise that the main aim of our empirical evaluations is to validate the utility of SPIE exploration objective as a simple modification to DQN. In principle, SPIE can be integrated with any state-of-the-art RL agent, and different instantiations of SPIE could be implemented to deal with the task at hand. We leave such investigation for future work. 

{\begin{table*}[t]
    \centering
    \small
    \caption{Evaluations of SPIE with deep RL implementation on hard-exploration Atari games (averaged over 10 random seeds,
    numbers in the parentheses are  1 standard errors).}
    \begin{tabular}{c c c c c c} 
        \hline
         & DQN & $\text{DQN}^{\text{MMC}}$ & RND & $\text{DQN}^{\text{MMC}}$-SR
         & $\text{DQN}^{\text{MMC}}$-SF-PF\\ [0.5ex] 
        \hline\hline
        Freeway & 32.4 (0.3)  & 29.5 (0.1) & 28.2 (0.2) &
       29.4 (0.1) & 27.5 (0.2) \\ 
       Gravitar & 118.5 (22.0) & 1078.3 (254.1) & 714.1 (105.9) & 457.4 (120.3) & 1223.0 (408.9)\\
        Mont. Rev. & 0.0 (0.0) & 0.0 (0.0) & 528 (314.0) & 1395.4
        (1121.8) & 1530.0 (1072.1) \\
        Private Eye & 1447.4 (2,567.9)  & 113.4 (42.3)  & 61.3 (53.7)  & 104.4
        (50.4) & 488.2 (390.9)\\
        Solaris & 783.4 (55.3) & 2132.6 (394.8) & 1395.2 (401.7) & 1890.1 (163.1) & 2455.8 (262.0) \\
        Venture & 4.4 (5.4) & 1220.1 (51.0) & 953.7 (167.3) & 1348.5 (56.5) &
        1274.0 (133.2)\\  
        \hline
       \end{tabular}
    \label{tab: atari}
\end{table*}
}

\section{Conclusion}
The development of more efficient exploration algorithms is essential for
practical implementation of RL agents in real-world environment where sample
efficiency and optimality are vital to success. Here, we propose a general
intrinsically motivated exploration framework, SPIE, where we construct
intrinsic rewards by combining both prospective and retrospective information
contained in past trajectories. The retrospective component provides information about the connectivity structure of the environment,
facilitating more efficient targeted exploration between sub-regions of
state space given structure awareness (e.g., robust identification of
the bottleneck states; Figure~\ref{fig: grids_demo}). SPIE yields more
sample efficient exploration in discrete MDPs under complete absence of external reinforcement. Moreover, a side benefit we observe empirically is that SPIE exhibits ethologically plausible exploratory
behaviour during exploration in grid worlds (i.e., cycling between different clusters of states). 
In continuous state space, we developed a novel generalization of the
predecessor representation, the predecessor features, for capturing
retrospective information in continuous spaces. Empirical
evaluations on both discrete and continuous MDPs demonstrate that SPIE yields improvements
over existing intrinsic exploration methods, in
terms of sample efficiency of learning and asymptotic performance, and for adapting to non-stationary reward structures.  

We instantiate SPIE using the SR and the PR, but we note that SPIE is
a general framework that can be implemented with other formulations (e.g.,
predictive error in a temporally backward direction~\citep{yu2021learning,
yu2021playvirtual}) and with more advanced neural
architectures (including those currently unthought of). Although here we have examined the empirical
properties of SPIE, the theoretical underpinnings for SPIE and the bottleneck
seeking exploratory behavior bears further investigation. Specifically, more work needs to be done to probe the theoretical property of using SF and PF in continuous settings. Our definition of
$r_{\text{SR-R}}$ overlaps with the successor
contingency~\citep{namboodiri2021learning,gallistel2014temporal}, which has long
been recognised for learning causal relationship between
predictors and reward~\citep{jenkins1965judgment}. An interesting venue for
future work is to investigate the implications of SPIE for causally guided
exploration in RL. Another interesting direction for future work
is to investigate the implications of SPIE in human exploration, where we could
utilise SPIE to investigate how human balance local (e.g., visitation counts)
versus global (e.g., environment structure) information for exploration in
sequential decision tasks~\citep{acuna2008structure,brandle2022intrinsically}.

\section*{Acknowledgement}

We thank Franziska Br{\"a}ndle, James Heald, and Ted Moskovitz for useful discussions, and anonymous reviewers for valuable comments. This work is funded by the UKRI, DeepMind, the Gatsby Charitable Foundation, the Simons Foundation, the Wellcome Trust, and the Harvard Brain Initiative and by the Center for
Brains, Minds and Machines (CBMM).

\bibliography{example_paper}
\bibliographystyle{unsrtnat}

\newpage
\appendix

\section{More Details on Predecessor Representation}
Here we provide proofs of the reciprocal relationship between the SR and the PR.
\begin{proposition}
    $\vec N \text{diag}(\vec z) = \text{diag}(\vec z) \vec M$, where $\text{diag}(\vec z)$ is the diagonal matrix with the diagonal elements as the vector $\vec z$, and $\vec z$ is the vector of stationary distribution of $\mathcal{P}^{\pi}$ (i.e., $\vec z[i] = \lim_{t\rightarrow \infty}\mathbb{E}_{\mathcal{P}^{\pi}}[s_{t} = i]$. 
\end{proposition}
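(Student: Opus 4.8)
The plan is to write both $\vec M$ and $\vec N$ as Neumann (geometric) series in their respective one-step transition operators, and then observe that the two series are conjugate to one another via $\text{diag}(\vec z)$. Write $\vec P = \mathcal{P}^{\pi}$ for the forward transition matrix, so that $\vec P[s,s'] = \mathcal{P}^{\pi}(s'\mid s)$, and let $\tilde{\vec P}$ be the \emph{retrospective} transition matrix with $\tilde{\vec P}[s',s] = \tilde{\mathcal{P}}^{\pi}(s_t = s\mid s_{t+1}=s') = \mathcal{P}^{\pi}(s'\mid s)\,z(s)/z(s')$. Unrolling the discounted sum in \eqref{eq: sr_def} gives $\vec M = \sum_{\tau\ge 0}\gamma^{\tau}\vec P^{\tau} = (\vec I-\gamma\vec P)^{-1}$, and reading the backward expectation in \eqref{eq: pr_def} as a $\tau$-step transition of the retrospective chain started from $s'$ gives $\vec N[s,s'] = \sum_{\tau\ge 0}\gamma^{\tau}(\tilde{\vec P}^{\tau})[s',s]$, i.e.\ $\vec N = \sum_{\tau\ge 0}\gamma^{\tau}(\tilde{\vec P}^{\top})^{\tau} = (\vec I-\gamma\tilde{\vec P}^{\top})^{-1}$. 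Both series converge since $\gamma\in(0,1)$ and $\vec P,\tilde{\vec P}$ are (row-)stochastic; stochasticity of $\tilde{\vec P}$ itself follows from $\sum_s z(s)\mathcal{P}^{\pi}(s'\mid s)=z(s')$, i.e.\ from $\vec z$ being stationary.

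The one computational step is the time-reversal identity $\text{diag}(\vec z)\,\vec P = \tilde{\vec P}^{\top}\,\text{diag}(\vec z)$, which is nothing more than the definition of $\tilde{\mathcal{P}}^{\pi}$ rearranged: the $(s,s')$ entry of the left side is $z(s)\mathcal{P}^{\pi}(s'\mid s)$, while the $(s,s')$ entry of the right side is $\tilde{\vec P}[s',s]\,z(s') = \mathcal{P}^{\pi}(s'\mid s)\,z(s)$. A one-line induction then upgrades this to $\text{diag}(\vec z)\,\vec P^{\tau} = (\tilde{\vec P}^{\top})^{\tau}\,\text{diag}(\vec z)$ for all $\tau\ge 0$ (base case $\tau=0$ trivial; inductive step multiplies the reversal identity on the right of $(\tilde{\vec P}^{\top})^{\tau}\text{diag}(\vec z)$). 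Multiplying by $\gamma^{\tau}$ and summing over $\tau$, the left side collapses to $\text{diag}(\vec z)\,\vec M$ and the right side to $\vec N\,\text{diag}(\vec z)$, which is exactly the claimed reciprocal relationship.

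An equivalent and perhaps more transparent route is purely probabilistic and entrywise: at stationarity one has, for every $\tau$, the identity $z(s)\,\Pr(s_{\tau}=s'\mid s_0=s) = \Pr(s_0=s,\,s_{\tau}=s') = z(s')\,\Pr(s_0=s\mid s_{\tau}=s')$, and the conditional on the far right is precisely the $\tau$-step transition probability of the retrospective chain from $s'$ to $s$. Multiplying by $\gamma^{\tau}$ and summing gives $z(s)\,\vec M[s,s'] = z(s')\,\vec N[s,s']$ for all $s,s'$, which in matrix form is $\text{diag}(\vec z)\,\vec M = \vec N\,\text{diag}(\vec z)$.

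The only genuine subtlety — worth stating explicitly rather than a real obstacle — is that the retrospective chain is time-homogeneous only when the forward chain is run at stationarity, which is exactly why $\vec z$ must be the stationary distribution of $\mathcal{P}^{\pi}$; with that hypothesis in place the $n\to\infty$ limit implicit in \eqref{eq: pr_def} is well defined and independent of $n$, and the rest is bookkeeping of transpose/index conventions together with convergence of the two geometric series (and, if one wants $\text{diag}(\vec z)$ to be invertible so as to state the relation as $\vec N = \text{diag}(\vec z)\vec M\,\text{diag}(\vec z)^{-1}$, irreducibility of $\mathcal{P}^{\pi}$ so that $z(s)>0$ for all $s$).
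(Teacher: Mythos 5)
Your proof is correct and rests on the same key identity as the paper's: the time-reversal relation $\text{diag}(\vec z)\,\vec P = \tilde{\vec P}^{\top}\text{diag}(\vec z)$ (the paper writes it, in its transposed indexing convention, as $\tilde{\vec P}\,\text{diag}(\vec z)=\text{diag}(\vec z)\,\vec P$). The only difference is presentational -- you propagate the identity through the Neumann series term by term (and give an equivalent entrywise probabilistic argument), whereas the paper conjugates the closed-form inverses $(\vec I-\gamma\vec P)^{-1}$ and $(\vec I-\gamma\tilde{\vec P})^{-1}$ directly; your added remarks on stochasticity of $\tilde{\vec P}$, convergence, and positivity of $\vec z$ are sound but not a different route.
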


\begin{proof}
    Given the formal definition of the SR and the PR (Eq.~\ref{eq: sr_def}; \ref{eq: pr_def}), we have the following analytical expressions.
    \begin{equation}
        \vec M = (\vec I - \gamma \mathcal{P}^{\pi})^{-1}; \quad \vec N = (\vec I - \gamma \tilde{\mathcal{P}}^{\pi})^{-1};
    \end{equation}
    where $\tilde{\mathcal{P}}^{\pi}$ is the temporally reversed transition distribution. Assume matrix formulation of $\mathcal{P}^{\pi}$ and $\tilde{\mathcal{P}}^{\pi}$, $\vec P$ and $\tilde{\vec P}$ in $\mathbb{R}^{|\mathcal{S}|\times |\mathcal{S}|}$, we have the following.
    \begin{equation}
    \begin{split}
        &\tilde{\vec P}_{ij} = \mathbb{P}(s_{t} = i|s_{t+1}=j) = \frac{\mathbb{P}(s_{t+1}=j|s_{t}=i)
        \mathbb{P}(s_{t}=i)}{\mathbb{P}(s_{t+1}=j} = \frac{\vec P_{ij} \vec z_{i}}{\vec z_{j}}\,,\\
        \Rightarrow &\tilde{\vec P} \text{diag}(\vec z) = \text{diag}(\vec z)\vec P\,,
    \end{split}
    \end{equation}
    Substituting the reciprocal relationship between $\tilde{\vec P}$ and $\vec P$ into the definition of the PR, we have the following.
    \begin{equation}
    \begin{split}
        \vec N &= \left(\vec I - \gamma \text{diag}(\vec z)\vec P\text{diag}(\vec z)^{-1}\right)^{-1}\,,\\
        \vec N \text{diag}(\vec z) &= \left(\vec I - \gamma \text{diag}(\vec z)\vec P\text{diag}(\vec z)^{-1}\right)^{-1}\text{diag}(\vec z) \\
        &= \left(\text{diag}(\vec z)^{-1}(\vec I - \gamma \text{diag}(\vec z)\vec P\text{diag}(\vec z)^{-1})\right)^{-1}\\
        &= \left((\vec I - \gamma \vec P)\text{diag}(\vec z)^{-1}\right)^{-1}\\
        &= \text{diag}(\vec z)\left((\vec I - \gamma \vec P)\right)^{-1}\\
        &= \text{diag}(\vec z)\vec M
    \end{split}
    \end{equation}
\end{proof}

\section{Further results on tabular hard exploration tasks.}
\label{sec: app_tabular_further}

\subsection{Graphical illustration of tabular hard-exploration tasks.} The demos of RiverSwim and SixArms is shown in Figure~\ref{fig: tabular_task}. In both tasks, the environmental transition dynamics impose asymmetry, biasing the agent towards low-rewarding states that are easier to reach, with greater rewards available in hard-to-reach states.

\begin{figure}[b]
    \centering
    \begin{subfigure}[b]{.55\linewidth}
        \centering
        \includegraphics[width=\linewidth]{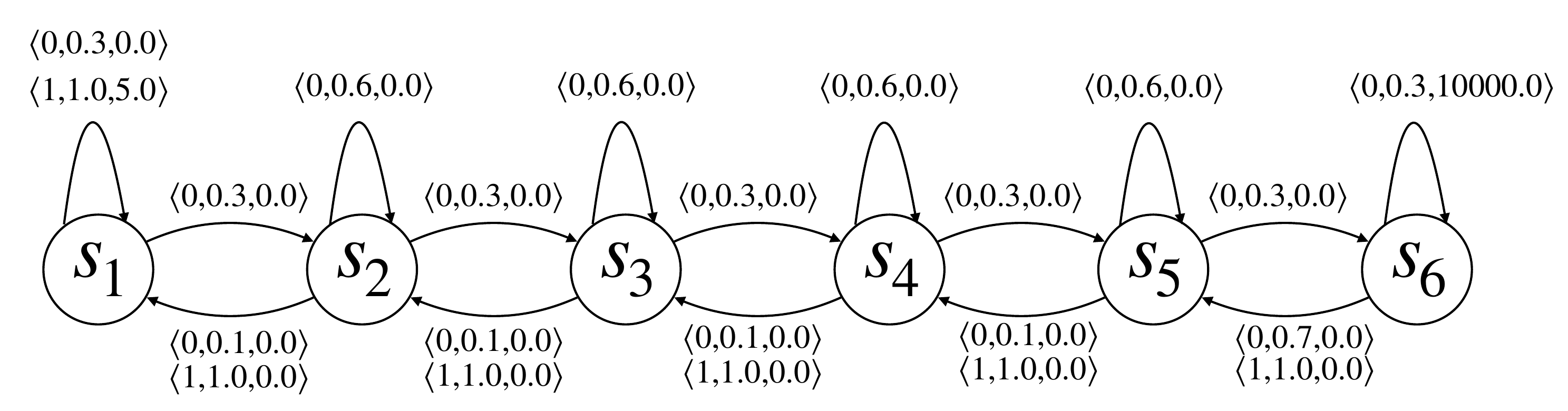}
        \caption{RiverSwim}
        \label{fig: riverSwim_demo} 
    \end{subfigure}
    \begin{subfigure}[b]{.4\linewidth}
        \centering
        \includegraphics[width=\linewidth]{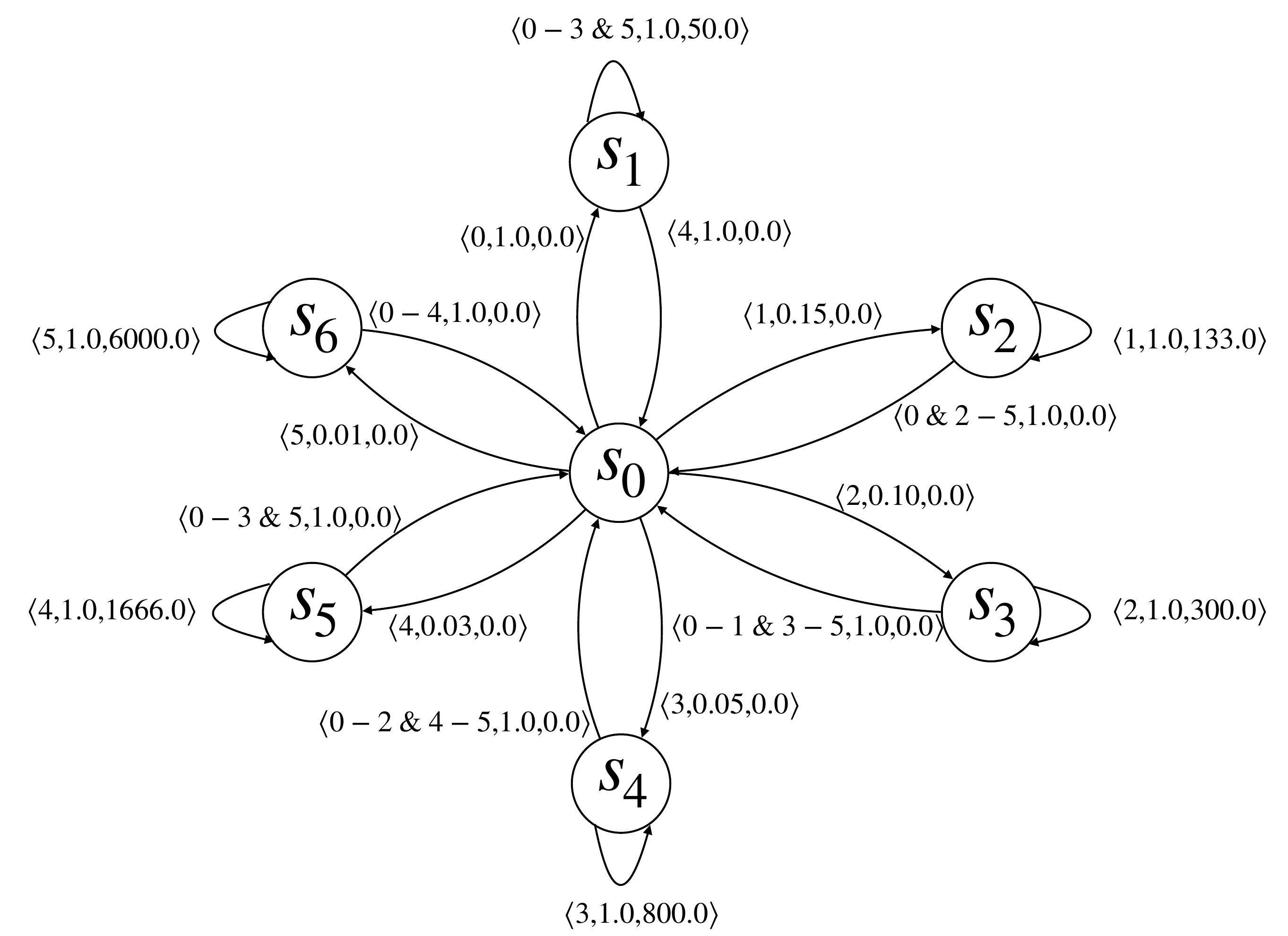}
        \caption{SixArms}
        \label{fig: sixArms_demo}
    \end{subfigure}

    \caption{Discrete MDPs. Transition probabilities are denoted by
    $\langle\text{action}, \text{probability}, \text{reward}\rangle$. In RiverSwim (a), the
    agent starts in state 1 or 2. In SixArms (b), the agent starts in state 0.}
    \label{fig: tabular_task}

\end{figure}

\subsection{Pseudocode for SARSA-SRR.} We provide the pseudocode for SARSA-SRR in Algorithm~\ref{alg: sarsa_srr}. We note that SARSA, SARSA-SR and SARSA-FR utilise the similar algorithm, but only replacing the intrinsic bonus.

\begin{algorithm}
\caption{Pseudocode for SARSA-SRR}\label{alg: sarsa_srr}
\begin{algorithmic}
\Require $\alpha$, $\eta$, $\gamma$, $\gamma_{\text{SR}}$, $\beta$, $\epsilon$
\State $s = env.reset()$;
\State $\vec M = \vec 0 \in \mathbb{R}^{|\mathcal{S}|\times|\mathcal{S}|}$; \Comment{Initialise the SR matrix as zero matrix}
\State $\vec Q = \vec 0 \in \mathbb{R}^{|\mathcal{S}|\times|\mathcal{A}|}$;
\While{not $done$}
\State $\theta \sim \mathcal{U}(0, 1)$;
\If{$\theta < \epsilon$} \Comment{$\epsilon$-greedy policy}
    \State $a \sim \mathcal{U}(\mathcal{A})$;
\Else
    \State $a = \argmax_{a\in\mathcal{A}}Q[s, a]$;
\EndIf
\State $s', r, done = env.step(a)$;
\State $\vec M[s, :] = \vec M[s, :] + \eta\left(\mathds{1}(s) + \gamma_{\text{SR}}(1-done)\vec M[s', :] - \vec M[s, :]\right)$; \Comment{TD-learning of the SR}
\State $r = r + \beta (\vec M[s, s'] - ||\vec M[:, s']||_{1})$;\Comment{Constructing intrinsic reward}
\State $\theta' \sim \mathcal{U}(0, 1)$;
\If{$\theta' < \epsilon$}
    \State $a'\sim \mathcal{U}(\mathcal{A})$;
\Else
    \State $a' = \argmax_{a\in\mathcal{A}}Q[s', a]$;
\EndIf
\State $\vec Q[s, a] = \vec Q[s, a] + \alpha\left(r + \gamma(1-done)\vec Q[s', a'] - \vec Q[s, a]\right)$;
\State $s = s'$;
\EndWhile
\end{algorithmic}
\end{algorithm}

\subsection{Evaluations given the fixed SR.} Conforming to our analysis of $r_{\text{SR-R}}$ with fixed SR (Section~\ref{sec: method}), we additionally evaluate SARSA-SR/FR/SRR with the corresponding intrinsic rewards constructed based on fixed SR/FR matrix on RiverSwim and SixArms (Table~\ref{tab: tabular_tasks_fixed}. Similar to what we found in the grid worlds (Figure~\ref{fig: grids_coverage_full_sr}), both SARSA-SR and SARSA-FR perform worse than their online-SR counterparts (note one exception being SARSA-FR on SixArms). However, in contrast to the decrease in exploration efficiency of SARSA-SRR in grid worlds, we found that fixing the SR actually improves the performance of SARSA-SRR. Hence, in accord with our analysis in Section~\ref{sec: method}, the cause for the improved empirical performance of $r_{\text{SR-R}}$ does not lie solely in the online learning process of SR, but might stems from the inherent ``bottleneck-seeking'' property of $r_{\text{SR-R}}$.

{
\begin{table}
\caption{Evaluations on RiverSwim and SixArms with intrinsic rewards based on fixed SR/FR (averaged over 100 seeds,
    numbers in the parentheses represents standard errors).}
  \renewcommand{\arraystretch}{1.2}
  \centering
  \begin{tabular}{cccc} 
    \hline
        & SARSA-SR & SARSA-FR & SARSA-SRR \\ [0.5ex] 
        \hline\hline
        RiverSwim & 327,402  & 278,096 & \textbf{3,096,913}\\ 
           & (787,118) & (666,752) & (230,059)\\
        SixArms & 969,781  & 1,143,037 & \textbf{2,059,424}\\
        & (2,895,306) & (1,939,021) & (3,292,936) \\  
        \hline
  \end{tabular}
  \vspace{1ex}
  \label{tab: tabular_tasks_fixed}
\end{table}
}

\subsection{Ablation studies of SPIE in discrete tasks}
\label{sec: sarsa_srr_ablation}

We perform ablation studies on SARSA-SRR for further demonstration of the utility of the SPIE objective of combining both the prospective and retrospective information. We firstly show that prospective information alone cannot yield strong exploration, whereas utilising solely the retrospective information maintains the strong explorative performance. We consider two variants of SARSA-SRR, SARSA-SRR(a) and SARSA-SRR(b), with the respective intrinsic rewards as following.
\begin{equation}
    \mathcal{R}_{\text{SR-R(a)}}(s, a, s')= \hat{M}[s, s']\,, \quad \mathcal{R}_{\text{SR-R(b)}}(s, a, s') = -||\hat{M}[:, s']||_{1}\,,
\end{equation}

From Table~\ref{tab: tabular_tasks_ablation}, we observe that utilising the prospective information alone for exploration yields suboptimal performance, hence empirically justifying the utility of the SPIE framework. However, we do observe that utilising the retrospective information alone yields near- or supra-optimal performance. Together, the results indicate that the global topological information contained in the retrospective information is essential for intrinsic exploration purposes. 

We argue that the dynamic balancing between exploring states with high uncertainty and bottleneck states is a key factor driving the empirical success of SPIE. In order to test this hypothesis, we devise a variant of the $\mathcal{R}_{\text{SR-R}}$.
\begin{equation}
    \mathcal{R}_{\text{SR-R(c)}} = ||\hat{M}[s, :]||_{1} - \hat{M}[s, s']\,,
\end{equation}
Intuitively, $\mathcal{R}_{\text{SR-R(c)}}$ provides an intrinsic motivation for taking transitions that lead to states that are less reachable from $s$, which only yields exploration towards states of high uncertainty, but does not provide any motivation towards bottleneck states. Indeed, as we observe from Table~\ref{tab: tabular_tasks_ablation} that SARSA-SRR(c) also yields suboptimal performance, providing empirical evidence supporting the benefits of SPIE in driving the agents towards bottleneck states. 

{
\begin{table}\small
\caption{Ablation studies of SARSA-SRR on RiverSwim and SixArms.}
  \renewcommand{\arraystretch}{1.2}
  \centering
  \begin{tabular}{ccccc} 
    \hline
         & SARSA-SRR & SARSA-SRR(a) & SARSA-SRR(b) & SARSA-SRR(c)\\ [0.5ex] 
        \hline\hline
        RiverSwim & $\mathbf{2,547,156 }$ & 127,703  & $\mathbf{2,629,947}$ & 95,691\\ 
           & (479,655) & (530,564) & (930,170) & (181,216)\\
        SixArms & $\mathbf{2,199,291}$ & 893,530  & $\mathbf{1,902,553}$ & 562,346 \\
        & (1,024,726) & (2,601,324) & (2,211,960) & (1,748,455) \\  
        \hline
  \end{tabular}
  \vspace{1ex}
  \label{tab: tabular_tasks_ablation}
\end{table}
}

\section{Further results on exploration in grid worlds}
\subsection{Transient dynamics of exploration.} 
We look more closely at the transient dynamics of the considered agents during pure exploration in \textit{Cluster-simple-large} (where \textit{Cluster-simple-large} denotes the $20\times 20$ grid world with two clusters). We observe that in the absence of external reinforcement, SARSA-SR, regardless of based on intrinsic rewards given either online-learned or fixed SR matrix, exhibits minimal exploration (Figure~\ref{fig: sr_20x20_full}~\ref{fig: sr_20x20_online}). This is largely due to its local exploration behaviour. For SARSA-FR, we observe significant difference between using online-trained and fixed FR matrix, where exploration with intrinsic rewards based on fixed FR completed disrupts exploration, only exploring a small proportion of the environment. In contrast, we observe that SARSA-SRR consistently fully explores both clusters (repeatedly) under both conditions. Additionally, by closely examining the transient dynamics during the exploration phase, we observe the ``cycling'' behaviour\footnote{see the attached videos in supplementary materials for the full exploration dynamics for the considered agents}.

\begin{figure*}
    \centering
    \begin{subfigure}[b]{\linewidth}
        \centering
        \includegraphics[width=\textwidth]{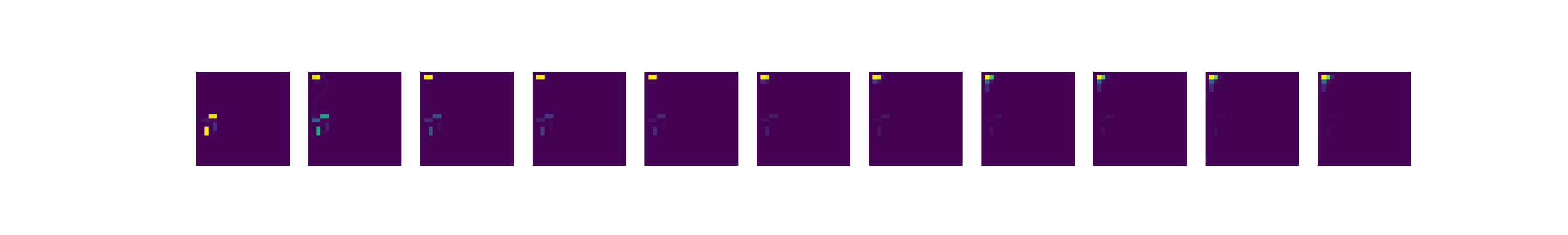}
        \caption{}
        \label{fig: sr_20x20_full}
    \end{subfigure}
    \hfill
    \begin{subfigure}[b]{\linewidth}
        \centering
        \includegraphics[width=\textwidth]{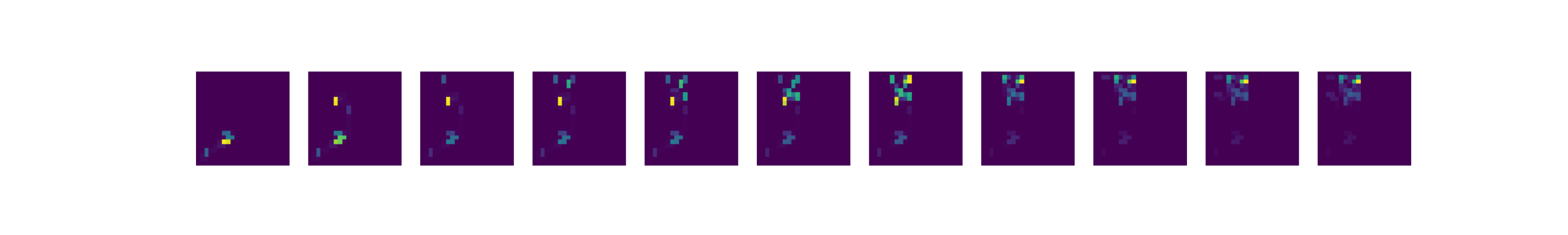}
        \caption{}
        \label{fig: sr_20x20_online}
    \end{subfigure}
    \hfill
    \begin{subfigure}[b]{\linewidth}
        \centering
        \includegraphics[width=\textwidth]{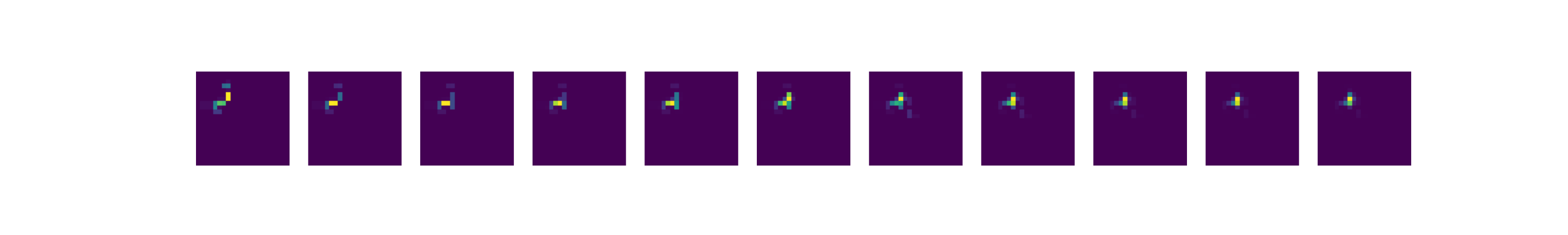}
        \caption{}
        \label{fig: fr_20x20_full}
    \end{subfigure}
    \hfill
    \begin{subfigure}[b]{\linewidth}
        \centering
        \includegraphics[width=\textwidth]{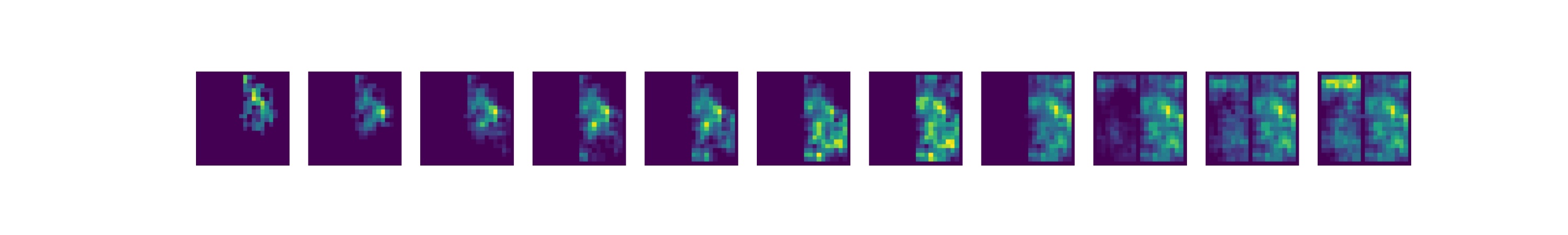}
        \caption{}
        \label{fig: fr_20x20_online}
    \end{subfigure}
    \hfill
    \begin{subfigure}[b]{\linewidth}
        \centering
        \includegraphics[width=\textwidth]{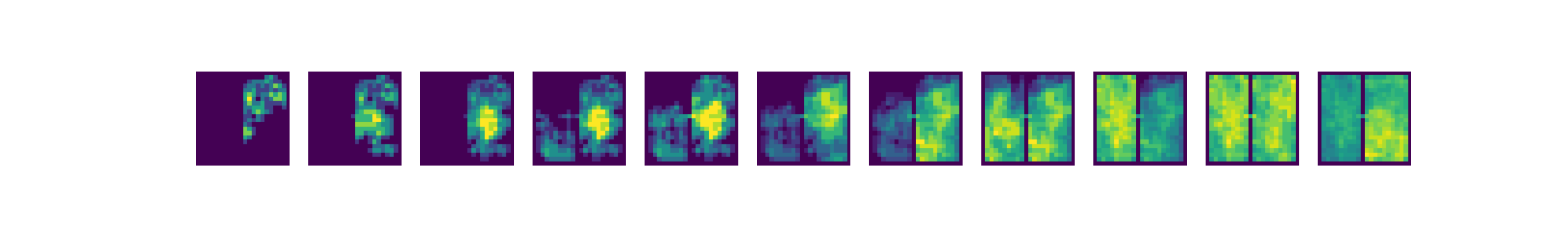}
        \caption{}
        \label{fig: sra_20x20_full}
    \end{subfigure}
    \hfill
    \begin{subfigure}[b]{\linewidth}
        \centering
        \includegraphics[width=\textwidth]{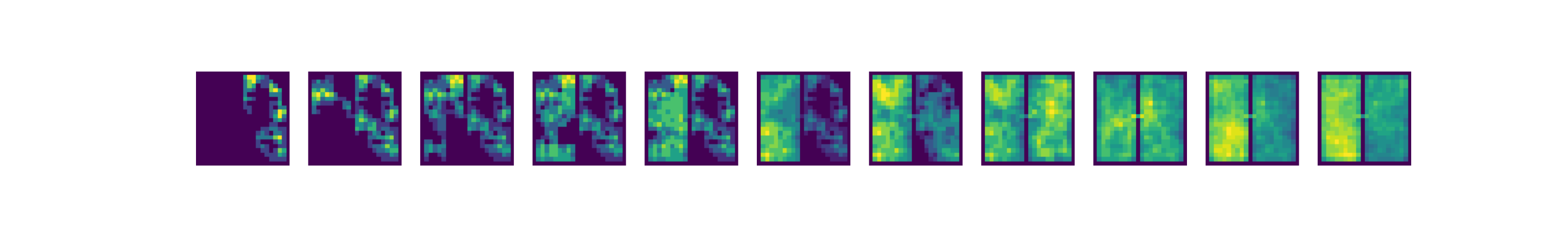}
        \caption{}
        \label{fig: sra_20x20_online}
    \end{subfigure}
       \caption{\textbf{Pure exploration given fixed SR / FR measures.} Temporal
       evoluation of state coverage heatmaps over $6000$ training steps of (a)
       SRASA-SR; (c) SARSA-FR; (e) SARSA-SRA agents with intrinsic rewards based
       on fixed SR/FR measures in \textit{OF-small}; and (b), (d), (f) for the counterparts with
       online-trained SR/FR measures in the $20\times 20$ \textit{Cluster-simple}
       grid world. From left to right: $200, 400, 600, 800, 1000, 1500, 2000, 3000, 4000, 5000, 6000$ steps.}
       \label{fig: grid_20x20_full}
\end{figure*}

\subsection{Effect of optimistic initialisation.} 
We note that across all considered SARSA agents, the Q values were initialised to be $0$ for all state action pairs. Given that all SR entries are non-negative, we know that $r_{\text{SR-R}}$ only admits negative rewards, hence the zero-initialisation yields optimistic initialisation, which encourages the agent to explore~\citep{brafman2002r, strehl2008analysis}. To disentangle the effect of SPIE from optimistic initialisation, we perform the ablation study on pure exploration with augmented SARSA-SR and SARSA-FR agents with optimistic initialisation. Specifically, we note that the maximum value the SR entries can take is $\frac{1}{1-\gamma}$, and additionally since the FR entries, by definition, are always less than or equal to the corresponding SR entries, we initialise the Q values for all state-action pairs for both SARSA-SR and SARSA-FR to be $\frac{1}{1-\gamma}$. We evaluate the exploration efficiency for the optimistically augmented agents on the grid worlds (Figure~\ref{fig: grid_exploration_optimistic}), and we observe that despite the optimistic initialisation improves the performance of both SARSA-SR and SARSA-FR relative to their corresponding naive counterparts, the performance differences in terms of exploration efficiency between the augmented agents and SARSA-SRR are significant, hence justifying the utility of the SPIE framework independent of the optimistic initialisation.

\begin{figure}
    \centering
    \includegraphics[width=\linewidth]{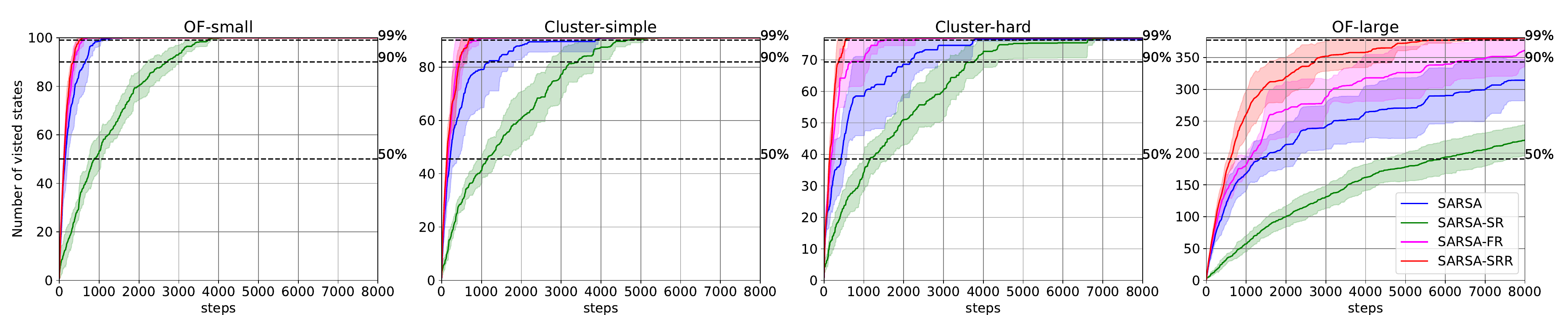}
    \caption{\textbf{Ablation study on optimistic initialisation on exploration efficiency.} We evaluate SARSA, SARSA-SRR, and optimistically augmented SARSA-SR and SARSA-FR on the considered grid worlds (Figure~\ref{fig: grids_demo}).}
    \label{fig: grid_exploration_optimistic}
\end{figure}

\section{Further results on deep RL implementation of SPIE in Atari games}

\subsection{Ablation study on the effect of predictive reconstruction auxiliary task}

In our implementation of DQN-SF-PF, by following relevant literature~\cite{oh2015action, machado2020count}, we include an additional sub-module in the neural architecture for predicting action-dependent future observation, which is trained via minimising the predictive reconstruction error. The purpose of including this sub-module is purely for learning better latent representations underlying the visual observation. We validate the utility of such predictive reconstruction auxiliary supervision by performing ablation study. We implemented an alternative version of DQN-SF-PF, removing the visual reconstruction sub-module, and test on Montezuma's Revenge. The resulting model achieves $551.5$ points (averaged over $5$ random seeds, s.e. equals $618.4$). We observe that there is a significant decrease from standard DQN-SF-PF (Table~\ref{tab: atari}), indicating the importance of stronger representation learning given the predictive reconstruction auxiliary task. Moreover, given the reported performance of $398.5$ points (s.e., equals $230.1$) of DQN-SF in the absence of predictive reconstruction auxiliary task from~\citet{machado2020count}, we observe that the SPIE objective still yields improved performance over exploration with SF alone, justifying the utility of SPIE irrespective of the specific neural architecture we choose. 

\section{Experiment Details}
Here we provide further details of the experiments presented in the main paper.

\textbf{Tabular tasks.} We run hyperparameter sweeps for all considered agents (SARSA, SARSA-SR, SARSA-FR, SARSA-SRR) on the following hyperparameters: $\{0.005, 0.05, 0.1, 0.25, 0.5\}$ for learning rate of TD learning for the Q values ($\alpha$); $\{0.005, 0.05, 0.1, 0.25, 0.5\}$ for learning rate of TD learning for the SR/FR matrices ($\eta$); $\{0.5, 0.8, 0.9, 0.95, 0.99\}$ for the discounting factor defining the SR/FR formulation ($\gamma_{\text{SR/FR}}$); $\{1, 10, 50, 100, 1000, 10000\}$ for the multiplicative scaling factor controlling the scale of the intrinsic rewards ($\beta$); $\{0.01, 0.05, 0.1\}$ for the degree of randomness in $\epsilon$-greedy exploration ($\epsilon$). The complete sets of optimal hyperparameters for the reported performance of the considered agents in Table~\ref{tab: tabular_tasks} (and for the corresponding agents with intrinsic rewards based on fixed SR/FR matrix; Table~\ref{tab: tabular_tasks_fixed}) in shown in Table~\ref{tab: hyperparam_tabular}.

{
\begin{table}
\caption{Hyperparameters for the considered agents in the tabular hard-exploration tasks (the values in parentheses are the corresponding hyperparameter values for the learning of the PR).}
  \renewcommand{\arraystretch}{1.2}
  \centering
  \begin{tabular}{cccccccc} 
    \hline
        & agent & $\alpha$ & $\eta$ & $\gamma$ & $\gamma_{\text{SR/FR}}$ & $\beta$ & $\epsilon$ \\ [0.5ex] 
        \hline\hline
        \textbf{RiverSwim} & SARSA & $0.005$ & - & $0.95$ & - & - & $0.01$\\
                           & SARSA-SR & $0.25$ & $0.1$ & $0.95$ & $0.95$ & $ 10$ & $0.1$ \\
                           & SARSA-FR & $0.25$ & $0.01$ & $0.95$ & $0.95$ & $50$ & $0.1$ \\
                           & SARSA-SRR & $0.1$ & $0.25$ & $0.95$ & $0.95$ & $10$ & $0.01$\\
                           & SARSA-SR-PR & $0.25$ & $0.25 (0.1)$ & $0.95$ & $0.95 (0.99)$ & $1$ & $0.01$\\
                           & SARSA-SR (fixed) & $0.01$ & - & $0.95$ & $0.95$ & $10$ & $0.05$ \\
                           & SARSA-FR (fixed) & $0.1$ & - & $0.95$ & $0.95$ & $10$ & $0.1$ \\
                           & SARSA-SRR (fixed) & $0.25$ & - & $0.95$ & $0.95$ & $10$ & $0.01$ \\
        \hline
        \textbf{SixArms} & SARSA & $0.5$ & - & $0.95$ & - & - & $0.01$\\
                         & SARSA-SR & $0.1$ & $0.01$ & $0.95$ & $0.99$ & $100$ & $0.01$ \\
                         & SARSA-FR & $0.1$ & $0.01$ & $0.95$ & $0.99$ & $100$ & $0.01$ \\
                         & SARSA-SRR & $0.01$ & $0.01$ & $0.95$ & $0.99$ & $10000$ & $0.01$ \\ 
                         & SARSA-SR-PR &  $0.05$ & $0.25 (0.25)$ & $0.95$ & $0.95 (0.99)$ & $10$ & $0.01$ \\
                         & SARSA-SR (fixed) & $0.5$ & - & $0.95$ & $0.95$ & $1$ & $0.01$ \\
                         & SARSA-FR (fixed) & $0.5$ & - & $0.95$ & $0.95$ & $1$ & $0.01$ \\
                         & SARSA-SRR (fixed) & $0.5$ & - & $0.95$ & $0.95$ & $10$ & $0.01$ \\
        \hline
  \end{tabular}
  \vspace{1ex}
  \label{tab: hyperparam_tabular}
\end{table}
}

\textbf{Exploration in grid worlds.} For all presented results in the grid worlds, we use the hyperparameters $(0.1, 0.1, 0.95, 0.95, 1.0, 0.1)$ for $(\alpha, \eta, \gamma, \gamma_{\text{SR/FR}}, \beta, \epsilon)$.

\textbf{MountainCar experiment.} We use the $128$-dimensional random Fourier features, defined over the two-dimensional state space (location$\times$speed), as the state representation. We use the hyperparameters $(0.1, 0.2, 0.2, 0.99, 0.95, 0.95, 1000, 0.3)$ for $(\alpha, \eta, \eta_{\text{PR}}, \gamma, \gamma_{\text{SR}}, \gamma_{\text{PR}}, \beta, \epsilon)$, where $\eta_{\text{PR}}$ and $\gamma_{\text{PR}}$ are the learning rate and discounting factor values for the PR, respectively.

\textbf{Atari experiments.} 
The neural architecture of the deep RL implementation shown in Figure~\ref{fig: dqn_demo}, here we provide the specific hyperparameters of the architecture. The \textit{Conv} block is a convolutional network with the configuration $(4, 84, 84, 0, 2) - ReLU - (64, 40, 40, 2, 2) - ReLU - (64, 6, 6, 2, 2) - ReLU - (64, 10, 10, 0, 0) - FC(1024)$, where the tuple represents a 2-dimensional convolutional layer with the architecture (num\_filters, kernel\_width, kernel\_height, padding\_size, stride), and $FC(1024)$ represents a fully connected layer with $1024$ hidden units. We take the output of the \textit{Conv} block as the $1024$-dimensional state representation given the observation, which is then subsequently used for computing the SF and the PF. The action input is transformed into a high-dimensional embedding through a linear transformation, $FC(2048)$. The MLP for the predictive reconstruction block is $FC(2048)-ReLU$, for the Q-value estimation block is $FC(|\mathcal{A}|)$, for the SF head block is $FC(2048)-ReLU-FC(1024)$, for the PF head block is $FC(2048)-ReLU-FC(1024)$. The \textit{Deconv} block is $FC(2048) - FC(1024) - ReLU - FC(6400) - Reshape((64, 10, 10)) - \langle 64, 6, 6, 2, 2\rangle - \langle 64, 6, 6, 2, 2\rangle - \langle 1, 6, 6, 0, 2\rangle - Flatten$, where the tuple represents a 2-dimensional deconvolutional layer with parameters $\langle$ num\_filters, kernel\_width, kernel\_height, padding\_size, stride $\rangle$.


\end{document}